\newcommand{\dashedred}{\textcolor{red}{\rule[0.5ex]{0.2em}{1pt}~\rule[0.5ex]{0.2em}{1pt}}}
\newcommand{\marker}[2]{\stackrel{(#1)}{#2}}
\newcommand{\tagging}[2]{\stackrel{(#1)}{#2}}
\newcommand{\given}{\, | \,}
\DeclareMathOperator{\FCR}{FCR}
\DeclareMathOperator{\pFCR}{pFCR}
\DeclareMathOperator{\FCP}{FCP}
\DeclareMathOperator{\E}{\mathbb{E}}
\DeclareMathOperator{\Sr}{\mathcal{S}}
\newcommand{\Ind}[1]{\mathbbm{1}\{#1\}}
\DeclareMathOperator{\Prob}{\mathbb{P}}
\DeclareMathOperator{\Jt}{\mathcal{J}_t}
\DeclareMathOperator{\Jtf}{\mathcal{J}_{t, \rm{off}}}
\DeclareMathOperator{\Jto}{\mathcal{J}_{t, \rm{on}}}
\DeclareMathOperator{\X}{\mathcal{X}}
\DeclareMathOperator{\Y}{\mathcal{Y}}
\definecolor{customred}{RGB}{255, 0, 0} 
\definecolor{customgreen}{RGB}{0, 128, 0} 
\definecolor{lightviolet}{RGB}{140, 120, 200}
\definecolor{lightgrey}{RGB}{180,180,180}
\definecolor{lightblue}{RGB}{140,190,220}
\definecolor{bisque}{RGB}{230,200,170}
\definecolor{pythonGreen}{RGB}{60, 160, 80}
\theoremstyle{plain}
\newtheorem{theorem}{Theorem}[section]
\newtheorem{proposition}[theorem]{Proposition}
\newtheorem{lemma}[theorem]{Lemma}
\theoremstyle{definition}
\newtheorem{definition}[theorem]{Definition}
\newtheorem{illustration}[theorem]{Simulation}
\theoremstyle{remark}
\newtheorem{remark}[theorem]{Remark}
\newcommand{\myitem}[1]{%
\item[#1]\protected@edef\@currentlabel{#1}%
}
\title{Online Selective Conformal Prediction: \\ Errors and Solutions}
\author{
	Yusuf Sale$^{1,4}$ and Aaditya Ramdas$^{2,3}$\\
	$^1$Institute of Computer Science, Ludwig-Maximilians Universität München\\
	$^2$Department of Statistics and Data Science, Carnegie Mellon University\\
    $^3$Machine Learning Department,  Carnegie Mellon University\\
    $^4$Munich Center for Machine Learning (MCML)
}
\date{}
\begin{document}
	\maketitle
	\setcounter{tocdepth}{1}
	\makeatletter
	\renewcommand\tableofcontents{%
		\@starttoc{toc}%
	}
	
	\makeatother
	
	\begin{abstract}
		 In online selective conformal inference, data arrives sequentially, and prediction intervals are constructed only when an online selection rule is met. Since online selections may break the exchangeability between the selected test datum and the rest of the data, one must correct for this by suitably selecting the calibration data. 
		In this paper, we evaluate existing calibration selection strategies and pinpoint some fundamental errors in the associated claims that guarantee selection-conditional coverage and control of the false coverage rate (FCR). To address these shortcomings, we propose novel calibration selection strategies that provably preserve the exchangeability of the calibration data and the selected test datum. Consequently, we demonstrate that online selective conformal inference with these strategies guarantees both selection-conditional coverage and FCR control. Our theoretical findings are supported by experimental evidence examining tradeoffs between valid methods.
	\end{abstract}

\section{Introduction}\label{sec:introduction}
\emph{Conformal prediction} \citep{vovk2005algorithmic} has gained significant traction in recent years as a method for uncertainty quantification equipped with statistical guarantees. Conformal prediction provides prediction intervals (or sets) that achieve a predefined coverage level, regardless of the underlying data distribution. In contrast to the \emph{full} (or transductive) conformal procedure, the computationally more efficient \emph{split} (or inductive) conformal prediction \citep{papadopoulos2002inductive, lei2018distribution} splits the available data into two parts: one for model training and another for calibration. More formally, suppose a pre-trained model $\hat{\mu}: \X \rightarrow \Y$ is given, mapping features $X \in \X$ to predictions of the label $Y \in \Y$. Further, denote by $\{(X_i, Y_i)\}_{i=1}^{n}$ an independently labeled \emph{calibration} data set. Given the current feature $X_{n+1}$, the goal is to construct a prediction interval for the unseen label $Y_{n+1}$. If the data sequence $\{(X_i, Y_i)\}_{i=1}^{n + 1}$ is exchangeable, the constructed prediction interval $\widehat{C}_n(X_{n+1})$ contains $Y_{n+1}$ with high probability (\emph{viz.} the prediction interval $\widehat{C}_n(X_{n+1})$ is  \emph{valid}). Specifically, we have 
\begin{align}\label{eq:marg-cov}
    \mathbb{P}\{ Y_{n+1} \in \widehat{C}_n(X_{n+1}) \} \geq 1 - \alpha,
\end{align}
where the probability is taken over $\{(X_i, Y_i)\}_{i=1}^{n+1}$, emphasizing the \emph{marginal} nature of \eqref{eq:marg-cov}. 

Conformal prediction has also been successfully adapted for \emph{online} applications. Assume that the data sequence $\{(X_t, Y_t)\}_{t \geq 0} \subseteq \X \times \Y$ is observed \textit{sequentially}: at each time $t$, we observe the previous label $Y_{t-1}$ and the current feature vector $X_t$. 
While much of the literature has focused on addressing cases where exchangeability does not hold---proposing generalizations or relaxations of this assumption \citep{tibshirani2019conformal, gibbs2021adaptive, barber2023conformal}---our focus is different: for us the data may be exchangeable, but challenges (indeed, deviations from exchangeability) are caused by selective querying/reporting. To elaborate, suppose data arrive sequentially, and we wish to \emph{selectively} report prediction intervals (see Figure~\ref{fig:motivation} for an illustration). The selection process (formally defined in Section~\ref{sec:setup}) might, for instance, dictate that a prediction interval for $Y_t$ is reported only if $X_t > 2$. More generally, the selection rule may depend on past decisions (in a restricted way formalized later)---for example, whether a prediction interval was reported for previous observations. Consequently, prediction intervals are not constructed for every observation, adding a layer of selective decision-making to online conformal inference, referred to as \emph{online selective conformal inference}. 
\begin{figure}[t!]
\centering
\includegraphics[width=\textwidth]{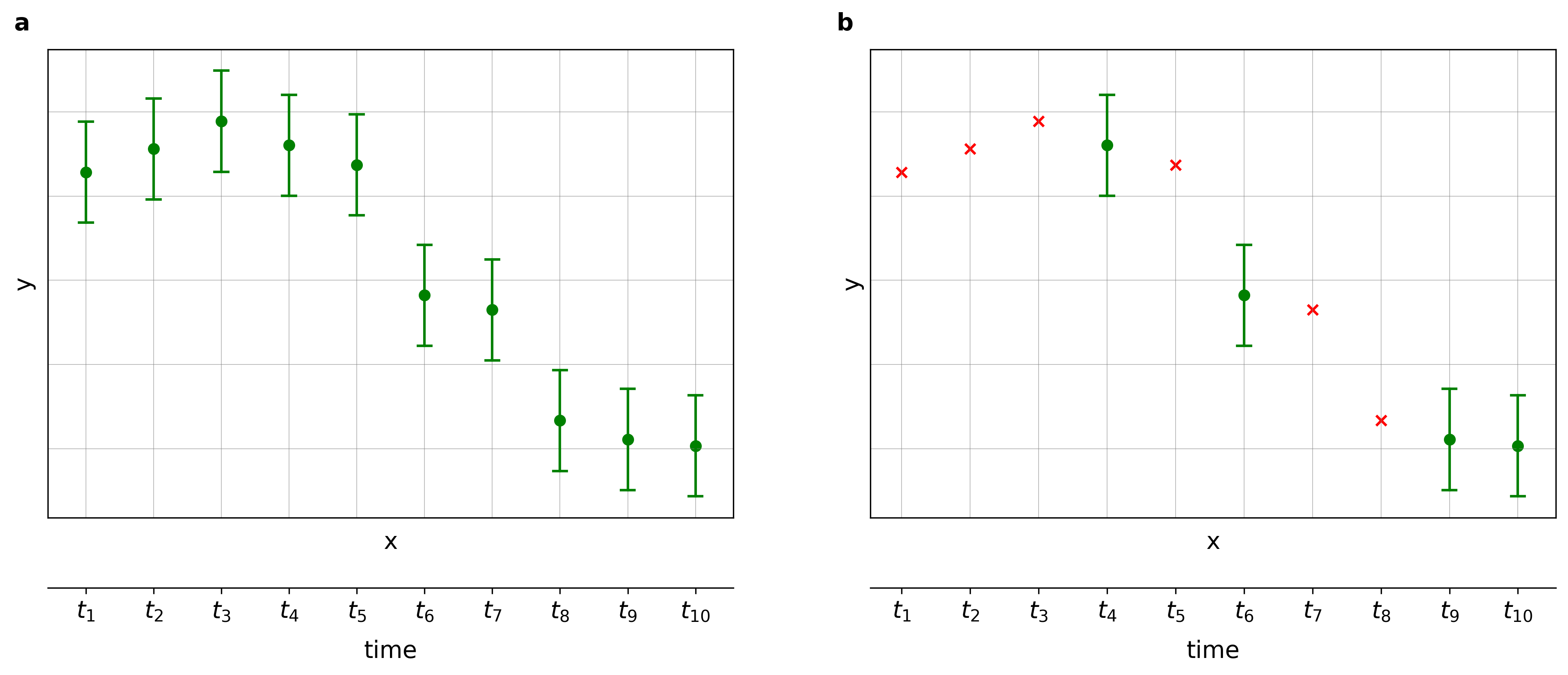}
\vspace{-0.5cm}
\caption{An illustrative example of the online selective (conformal prediction) setting. \textbf{(a)} Usual online conformal prediction setting, where one constructs for \emph{each} time $t \geq 0$ a prediction interval. \textbf{(b)} In the online \emph{selective} conformal predictive setting, we only report prediction intervals for \textcolor{customgreen}{selected ($\bullet$)} times, while no prediction intervals are constructed for those that are \textcolor{customred}{not selected ($\bf{\times}$)}.}
\label{fig:motivation}
\end{figure}

In the online selective setting, it is natural to require that prediction intervals are valid conditioned on the selection event---a metric called \emph{selection-conditional coverage}:
\begin{align}\label{eq:sel-cond}
    \forall t \geq 0 \, : \, \mathbb{P}\{ Y_t \in \widehat{C}_t(X_t) \, | \, S_t = 1 \} \geq 1 - \alpha. 
\end{align}
Intuitively, a prediction interval should be valid whenever a selection is made, that is, whenever $S_t =1$. Since the data itself is exchangeable, this requirement might seem innocuous at first glance. However, it is important to highlight a critical aspect. The (potential) dependence between online selections and the prediction intervals can give rise to temporal \emph{multiplicity}. In the statistical literature, this issue was previously recognized in the context of constructing confidence sets. It was first noted in the offline setting by \citet{benjamini2005false} and later addressed in the online regime by \citet{weinstein2020online}. Recently, \citet{bao2024cas} addressed the same problem and proposed a procedure called \emph{calibration after adaptive pick} (CAP) to construct conformal prediction intervals that satisfy \eqref{eq:sel-cond}. However, despite the claims of \citet{bao2024cas}, we demonstrate that the proposed method does \emph{not}, in fact, guarantee selection-conditional coverage. 

Other metrics to assess the errors associated with the reported prediction intervals are the false coverage proportion (FCP) and false coverage rate (FCR) \citep{weinstein2020online}
\begin{align}
    \FCP(T) = \frac{\sum_{t = 0}^{T} S_t \mathbbm{1}\{Y_t \not\in \widehat{C}_t(X_t) \}}{1 \vee \sum_{t = 0}^T S_t}, \quad 
    \FCR(T) = \E\biggl[\frac{\sum_{t = 0}^{T} S_t \mathbbm{1}\{Y_t \not\in \widehat{C}_t(X_t) \}}{1 \vee \sum_{t = 0}^T S_t} \biggr] \label{def:fcr},
\end{align}
where $a \vee b = \max(a,b)$ for $a, b \in \mathbb{R}$. The FCP measures the proportion of selected instances where the true label $Y_t$  falls outside the prediction interval $\widehat{C}_t(X_t)$, while the FCR is the expected value of the FCP. \citet{weinstein2020online} proposed a method called LORD-CI, which effectively controls the FCR below a target level $\alpha$. Although originally proposed for confidence sets, the authors note that the LORD-CI procedure can be adapted to the online conformal prediction framework. We find that their claim is true in a particular restricted setting, but that a broader extension is \emph{not} straightforward and highlight the difficulties involved.

The key challenge in the online selective conformal inference setting is the selection of calibration data to form $\widehat C_t$ when $S_t=1$. Specifically, since the selected data point might not be exchangeable with all preceding data (due to the selection), how should calibration data be chosen so as to \emph{restore} the exchangeability of the test datum and the calibration data? To address this, we discuss various calibration selection \emph{strategies} within the framework of online selective conformal inference, evaluating their ability to ensure selection-conditional coverage. Our theoretical results reveal that many existing calibration selection strategies fail to meet this quite stringent criterion. To bridge this gap, we propose novel calibration selection strategies with theoretical guarantees for selection-conditional coverage, supported by simulation studies. Beyond selection-conditional coverage, we further investigate whether the (seemingly) weaker metric of FCR can be successfully controlled by online selective conformal prediction algorithms instantiated with both existing and novel calibration selection strategies. Interestingly, we find that no existing strategy achieves this, while our proposed methods provide provable guarantees.

The remainder of the paper is organized as follows. Section~\ref{sec:setup} formally defines the problem and introduces necessary notation. 
In Section~\ref{sec:osci}, we outline the online selective conformal inference procedure and discuss several calibration selection strategies, including both existing and novel approaches. 
Section~\ref{sec:osci-scc} assesses, empirically and theoretically, whether existing calibration selection strategies ensure selection-conditional coverage and provides theoretical guarantees for our proposed methods. Section~\ref{sec:fcr} investigates FCR control, followed by a discussion of other related conformal methods. We conclude with a summary of related work and a brief discussion. Omitted proofs are deferred to Appendix~\ref{app:proofs}.    

% % Problem setuo
\section{Problem Setup}\label{sec:setup}
Consider data $\{ (X_t, Y_t) \}_{t \geq 0} \subseteq \X \times \Y$ arriving sequentially: at time $t$, we observe the previous label $Y_{t-1}$ (if $t>0$) and the new feature $X_t$. For brevity, we also write $Z = (X, Y)$ and $\mathcal{Z} = \mathcal{X} \times \mathcal{Y}$. We assume that a pre-trained model $\hat{\mu}: \X \rightarrow \Y$ is given, where $\widehat{Y}_t = \widehat{\mu}(X_t)$ denotes a prediction of the label $Y_t$. 
Let $\mathcal{S}_t: \X \rightarrow \{0,1\}$ be an online selection \emph{rule}, and correspondingly denote by $S_t = \mathcal{S}_t(X_t)$ the selection \emph{indicator}. Here, $S_t = 1$ indicates that a prediction interval for $Y_t$ will be reported. Further, we denote by $\mathcal{F}_{t-1} = \sigma(S_0,\dots,S_{t-1})$ the filtration generated by past selection decisions. We focus on a restricted but reasonably rich class of \emph{decision driven} rules:
\begin{definition}\label{def:dec-sel}
A selection rule $\mathcal{S}_t: \X \rightarrow \{0,1\}$ is called \emph{decision driven} if $\mathcal{S}_t$ is $\mathcal{F}_{t-1}$-measurable.
\end{definition}
For example, when $\X = \mathbb{R}$, the following selection rule is clearly decision driven, as it depends only on past selections $\{S_i\}_{i \leq t-1}$ and constants $\tau_0, \tau_1$:
\begin{align*}
x \mapsto \mathbbm{1}\left\{ x < \tau_1 + \frac{1}{\tau_0} \sum_{i=0}^{t-1} S_i \right\}.
\end{align*}
For now, let $\widehat{C}_t: \X \times [0,1] \rightarrow 2^{\Y}$ denote a generic prediction interval for $Y_t$, with details on its construction and validity addressed in the next section (for simplicity we omit its second argument corresponding to the target miscoverage level). Define $\{\widehat{C}_t(X_t) : S_t = 1\}_{t \geq 0}$ as the collection of prediction intervals produced by an online selective procedure (namely, only reporting $\widehat{C}_t(X_t)$ whenever $S_t = 1$).

Let $R: \X \times \Y \rightarrow \mathbb{R}$ denote a generic non-conformity score, such as absolute residuals $ R_i = | \widehat{\mu}(X_i) - Y_i |$. We refer to $\{ (X_i, Y_i): i \in \mathcal{D}_t\}$ as calibration data at time $t$. For $\alpha \in (0,1)$, the empirical quantile of the non-conformity scores $\{ R_i \}_{i \in \mathcal{D}_t}$ is denoted by $\widehat{Q}_{\alpha}(\{ R_i \}_{i \in \mathcal{D}_t})$.

Now, define the index set $$\Jt = \Jtf \, \cup \, \Jto,$$ where $\Jtf = \{-n, \dots, -1\}$, with $n \in \mathbb{N}$, is the set of indices of some \emph{offline} data, and $\Jto = \{0, \dots, t-1\}$ the indices of the \emph{online} data up to time $t$, respectively. Additionally, let $N = N_{\rm{off}} + N_{\rm{on}}$, where $ N_{\rm{off}} = |\Jtf|$ and $N_{\rm{on}} = |\Jto|$.
Furthermore, we assume that the offline and online data are exchangeable, though this assumption will be relaxed later. Additionally, we make the following assumption through the paper \citep{bao2024cas}:
\begin{itemize}
    \myitem{(A)}\label{assumption:a} \emph{All decision driven selection rules $\{\mathcal{S}_t\}_{t \geq 0}$ are independent of the offline data $\{Z_i\}_{i \in \Jtf}$.}
\end{itemize}
From time $t = 0$ onward data arrives sequentially, and selections are performed for this newly arriving data. Lastly, we formally introduce the online \emph{calibration} (data) selection protocol. Let  $\mathcal{K}_{t}: \X \rightarrow \{0,1\}$ denote the \emph{calibration selection strategy} (\emph{strategy} for short). Consequently, we define the calibration selection \emph{protocol} (mapping data to calibration indices) as
\begin{align}\label{def:protocol}
\mathcal{I}_t:\, \mathcal{Z}^{N + 1} \rightarrow 2^{\Jt}: \quad  (Z_{-n},\dots,Z_t) \mapsto  \{j \in  \mathcal{J}_t:\,  \mathcal{K}_t(X_j) = 1 \}. \tag{P}
\end{align}
Thus, we have $ \mathcal{D}_t \coloneqq \mathcal{I}_t(Z_{-n},\dots, Z_t) \subseteq \Jt$. Specifically, $\mathcal{K}_{t}(X_j) = 1$ indicates that, at time $t$, the candidate calibration datum $Z_j = (X_j, Y_j)$ will be included in the calibration set. Note that, we only perform calibration selection, if $\Sr_t(X_t) = 1$.
Otherwise, calibration data is unnecessary, as prediction intervals are constructed---and thus require a calibration set---only when the test datum is selected. Importantly, different choices of 
$\mathcal{K}_t$ influence both the size and composition of the calibration set, which in turn impacts the validity and informativeness of the resulting prediction intervals.

\section{Online Selective Conformal Inference}
\label{sec:osci}
An online selective (split) conformal inference procedure involves the following steps. Up to time $t \geq 0$, we  observe the data $(X_0,Y_0), \dots, (X_{t-1}, Y_{t-1})$, which consists of feature-response pairs from previous time points. Then, in the vein of split conformal prediction, we proceed as follows \citep{bao2024cas}:

\begin{tcolorbox}[colframe=black, colback=white, boxsep=0pt, left=3pt, sharp corners, boxrule = 0.8pt]
\begin{enumerate}
    \item Specify the selection rule $\mathcal{S}_t \in \mathcal{F}_{t-1}$, and calibration selection strategy $\mathcal{K}_{t}$ \emph{before} observing the current feature $X_t$.
    \item Observe $X_t$, and if $S_t = 1$ decide to report a prediction interval for the yet unobserved $Y_t$.
    \item Obtain the calibration indices $\mathcal{D}_t$ via \eqref{def:protocol}.
    \item Compute the non-conformity scores $\{R_i\}_{i \in \mathcal{D}_t}$ for the calibration set.
    \item Report the prediction interval $$\widehat{C}_t(X_t) = \{ y \in \Y:R(X_t,y) \leq \widehat{Q}_{1-\alpha}(\{ R_i \}_{i \in \mathcal{D}_t}) \}.$$
\end{enumerate}
\end{tcolorbox}

A crucial aspect to address is the \emph{selection} of calibration data. In standard (online) conformal prediction, the calibration data is typically chosen as $\mathcal{D}_t = \mathcal{J}_t$, which corresponds to choosing the calibration selection strategy $\mathcal{K}_{t}(X_i) \equiv 1$ in the calibration selection protocol \eqref{def:protocol}. However, using the \emph{entire} available data for calibration poses challenges in the selective setting. Roughly speaking, the selection rule introduces an asymmetric dependence between the current data point and past data through the selection process. Recall that we focus on decision driven selection rules; while the selection \emph{rule} itself does not depend on $X_t$,  it does rely on past \emph{decisions} and therefore depends on $\{X_i\}_{i \in \Jt}$. Such dependencies typically violate exchangeability, the workhorse of conformal prediction. As a result, naively applying standard conformal methods can lead to violations of desired coverage guarantees. Mitigating this requires carefully designing calibration selection strategies that account for the dependence introduced by the selection mechanism. The challenge, therefore, is to ``design'' calibration selection strategies that preserve exchangeability while maintaining a sufficiently large calibration set. As we will see soon, different strategies impose varying trade-offs between statistical guarantees and practical applicability. Some approaches prioritize preserving exchangeability at the cost of reducing the calibration set size, while others aim to retain more calibration points but risk violating exchangeability. 

A natural question that arises is: given that a test point has been selected, can we identify a suitable calibration selection strategy that \emph{ensures} exchangeability between the calibration data and the test datum? The answer is affirmative but also pinpoints the challenges inherent in the selective setting.

\subsection{Existing calibration selection strategies}\label{sub:exist}
To address this question, we first discuss both existing and novel calibration selection strategies. A natural point of departure is the \emph{full} strategy (\texttt{FULL}):
\begin{align*}\label{strategy:full}
 \mathcal{K}_t(X_j) = 1, \; \text{for} \; j \in \Jt.  
 \tag{\emph{i}} 
\end{align*}
In other words, at each selected time the calibration data consists of \emph{all} previously observed data, i.e., $\mathcal{D}_t = \Jt$. While this strategy yields the largest calibration set, it typically breaks exchangeability because the test datum is selected via a rule, whereas the calibration set is unaffected by the rule, consisting of all previous observations.

Another strategy, the \emph{selection-full} strategy (\texttt{S-FULL}) selects calibration data according to the selection rule given at time $t$, i.e., 
\begin{align*}\label{strategy:sfull}
\mathcal{K}_t(X_j) = \mathcal{S}_t(X_j), \; \text{for} \; j \in \Jt. \
\tag{\emph{ii}} 
\end{align*}
However, because we restrict ourselves to decision driven selection rules, the test datum at time $t$ is selected independently of future outcomes, while this is not true of the calibration set. This asymmetric dependency leads to a violation of exchangeability.

Similarly, we define the \emph{selection-fixed} strategy (\texttt{S-FIX}), which selects calibration data (according to the selection rule given at time $t$) solely from the offline data:
\begin{align*}\label{strategy:sfix}
\mathcal{K}_{t}(X_j) = 
\begin{cases}
\mathcal{S}_t(X_j) & \text{for } j \in \Jtf, \nonumber \\
0 & \text{for } j \in \Jto. 
\tag{\emph{iii}}
\end{cases}
\end{align*}
If Assumption \ref{assumption:a} holds, this strategy is unproblematic, as the offline data remains independent of the selection rules. However, caveats exist: when no offline calibration data is available or when Assumption \ref{assumption:a} does not hold. Additionally, even if Assumption \ref{assumption:a} is satisfied, very few calibration points may be selected, no matter how large $t$ is.

Recognizing these limitations, \citet{bao2024cas} recently introduced an alternative strategy. The \emph{adaptive} calibration selection strategy (\texttt{ADA}) is defined as follows:
\begin{align*}\label{strategy:ada}
\mathcal{K}_{t}(X_j) =
\begin{cases}
\mathcal{S}_t(X_j) &  \text{for } j \in  \Jtf, \nonumber \\
\mathcal{S}_t(X_j) \Ind{\mathcal{S}_j(X_j) = \mathcal{S}_j(X_t)} &  \text{for } j \in  \Jto. 
\tag{\emph{iv}}
\end{cases}
\end{align*}
\citet{bao2024cas} claim that this adaptive calibration selection strategy ensures selection-conditional coverage. However, we soon provide both theoretical and empirical evidence demonstrating that this is, in fact, \emph{not} the case.

\subsection{Novel calibration selection strategies}\label{sub:novel}
To address such issues, we introduce the \emph{exchangeability-preserving} strategy (\texttt{EXPRESS}), defined as follows:
\begin{align*}
\mathcal{K}_{t}(X_j) = \mathcal{S}_t(X_j) \hspace{-0.15cm} \prod_{i \in \Jto} \hspace{-0.2cm} \Ind{\mathcal{S}_i(X_j) =\mathcal{S}_i(X_t)}, \; \text{for} \; j \in \Jt. 
\tag{\emph{v}}  \label{strategy:expres}
\end{align*}
Clearly, this strategy is quite stringent, as it requires identical past selection decisions for both the candidate calibration point and the selected test datum.  Consequently, the calibration data includes only points that have undergone the same selection process as the test datum, strictly preserving exchangeability. However, this restrictiveness can significantly reduce the calibration set size, potentially rendering it empty in some cases.

To relax the strict constraints of the previous strategy, we introduce a variant called the $k$-\emph{exchangeable preserving} strategy (\texttt{K-EXPRESS}), which considers only the last $k\geq 1$ points: 
\begin{align*}
\mathcal{K}_{t}(X_j) = \mathcal{S}_t(X_j)  \prod_{i = t - k}^{t -1}  \Ind{\mathcal{S}_i(X_j) =\mathcal{S}_i(X_t)}, \; \text{for} \; j \in \mathcal{J}^{k}_t,
\tag{\emph{vi}} \label{strategy:kexp}
\end{align*}
where $\mathcal{J}_t^k = \Jtf \, \cup \, \{t - k,\dots, t -1 \}$. Instead of enforcing consistency with \emph{all} past selection decisions, \texttt{K-EXPRESS} restricts the comparison to the last $k$ points. This variant mitigates the risk of an excessively small calibration set while still preserving exchangeability.

Our final strategy, called \texttt{EXPRESS-M}, applies both \texttt{S-FIX} and \texttt{EXPRESS} separately and then \emph{merges} the resulting prediction intervals. That is, prediction intervals are constructed independently using both strategies, and their intersection is taken. While various merging schemes are possible, we use an uneven split of $\alpha$ between the two methods for constructing the final prediction interval. Specifically, for $T > 0$ 
we allocate  $(1/\sqrt{T}) \alpha$ to \texttt{S-FIX}, while the remaining $(1 - 1/\sqrt{T})\alpha$  is assigned to \texttt{EXPRESS}. This merging approach mitigates the restrictiveness of \texttt{EXPRESS}---which can result in too few calibration data and, consequently, excessively large prediction intervals---by leveraging the more inclusive nature of \texttt{S-FIX}. 

\begin{remark}
One might naturally ask why we do not adopt a strategy that applies $\Sr_t(X_j)$ for $j \in \Jtf$ while applying the exchangeability-preserving strategy \eqref{strategy:expres} for $j \in \Jto$. This strategy would indeed be less restrictive, and therefore yield a larger calibration set. While this may seem unproblematic at first glance, we show that it subtly violates the exchangeability between the (offline) calibration data and the test datum. 
\end{remark}

%% Selection-Conditional Coverage
\section{Selection-Conditional Coverage}\label{sec:osci-scc}
In this section, we show shortcomings of existing calibration selection strategies in ensuring selection-conditional coverage. Additionally, we demonstrate that online selective conformal inference, when instantiated with our novel strategies, guarantees selection-conditional coverage. First, we evaluate both existing and novel calibration selection strategies empirically with the following simulation: 

\begin{illustration}\label{ill:rules}
% \emph{Data generation.} 
In each iteration of the simulation, we generate data of size $N = N_{\rm{off}} + N_{\rm{on}}$. For the following results, we set $N_{\rm{off}} = 10$ and $N_{\rm{on}}=20$. We generate a univariate feature $X \sim \rm{Unif}[0,2]$ and model the response as $$Y = \mu(X) + \epsilon \quad \textnormal{with} \quad \mu(X) = X\beta,$$ where we assume a heterogeneous noise distribution, i.e.,  $\epsilon \, | \, X \sim \mathcal{N}(0, X/2)$. For simplicity, we assume $\beta = 1$. Since we are in a synthetic setting, we have direct access to the true function and use it as our model; specifically, we have $\widehat{\mu}(\cdot) = \mu(\cdot)$. Further, the selection rules are defined as
\begin{align*}
 x \mapsto  
    \begin{cases}
        \mathbbm{1}\left\{ x < \frac{1}{\tau_0 } \sum_{i = 0}^{j - 1} S_i  \right\}  & \hspace{-0.2cm} \text{for } j \leq t - 1 \\[0.3cm]
        \mathbbm{1}\left\{ \sum_{i = 0}^{j-1} S_i > \tau_1 \right\} & \hspace{-0.2cm} \text{for } j = t,
    \end{cases}
\end{align*}
where we choose $\tau_0 = 20$ and $\tau_1 = 16$. We then perform online selective conformal prediction with both existing and novel strategies. All reported metrics are averaged over $N = \num{1e6}$ runs.
\end{illustration}

\emph{Results.} Figure \ref{fig:mis-rule-a} summarizes the performance of different calibration selection strategies in online selective conformal prediction. 
With the exception of \texttt{S-FIX}, all existing strategies \emph{fail} to provide valid selection-conditional prediction intervals, as they do not maintain the target miscoverage level of $\alpha = 0.4$. In contrast, both \texttt{S-FIX} and the family of exchangeability-preserving strategies achieve exact coverage. 
From a practical standpoint, while \texttt{EXPRESS} yields the smallest prediction intervals, its restrictive nature results in a significant fraction ($\approx 17.71 \%$) of infinite-length intervals. Its variants, \texttt{EXPRESS-M} and \texttt{K-EXPRESS}, mitigate this issue and perform comparably well. 

These results empirically highlight the fundamental shortcomings of existing calibration selection strategies in ensuring selection-conditional validity. To reiterate, the exchangeability-preserving approaches not only achieve the desired theoretical guarantees but also offer practical advantages by balancing interval tightness and calibration set size. While \texttt{EXPRESS} is the most conservative, its variants effectively address its limitations, making them viable alternatives in real-world applications. In Appendix \ref{app:sim}, we present additional simulations using alternative decision driven selection rules.
\begin{figure}[t!]
\centering
\includegraphics[width=0.6\textwidth]{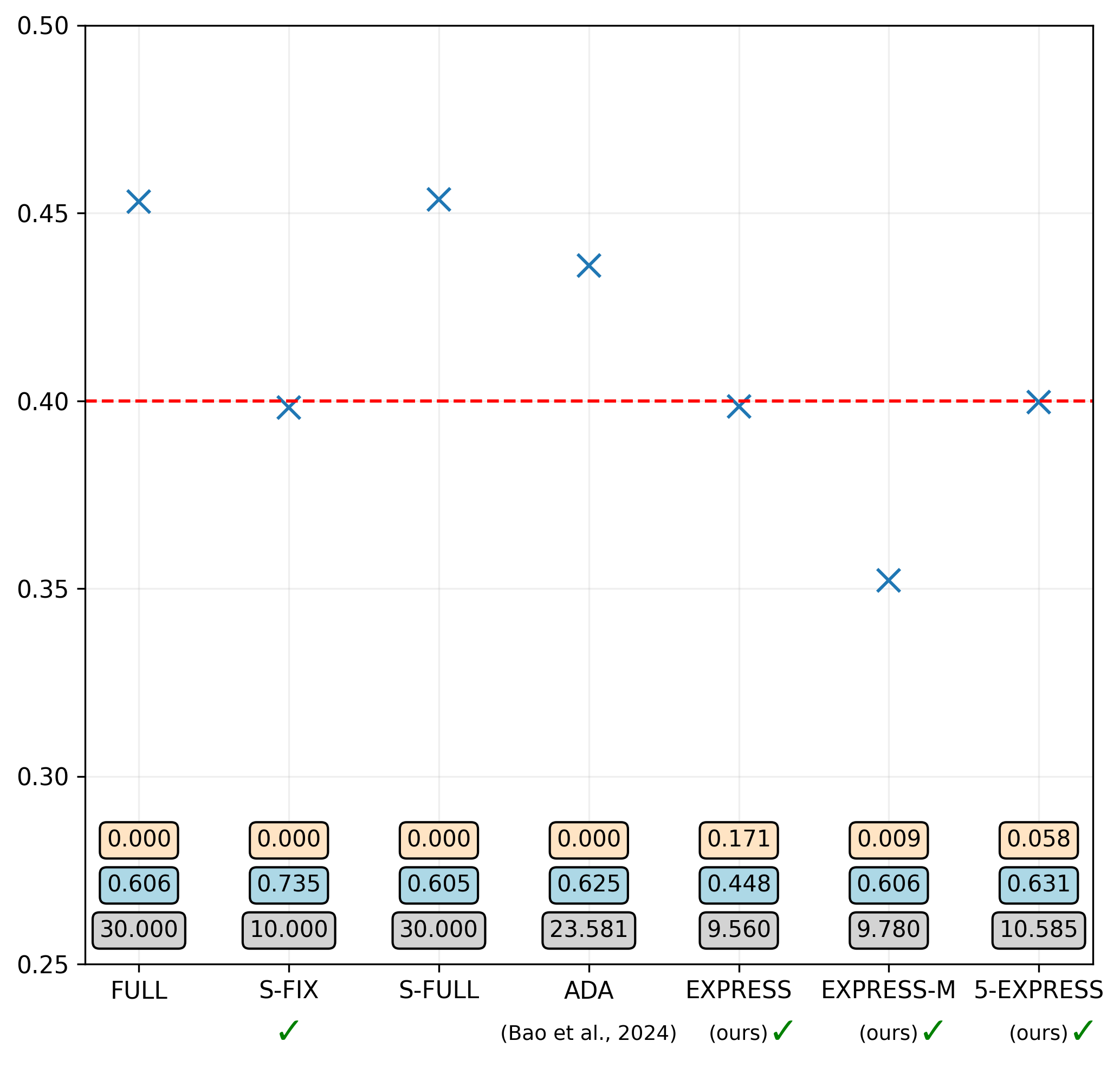}
\vspace{-0.5cm}
\caption{Miscoverage (\includegraphics[height=0.5em]{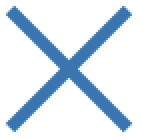}) is shown alongside the number of calibration points (\textcolor{lightgrey}{\rule{0.55em}{0.55em}}), median interval length (\textcolor{lightblue}{\rule{0.55em}{0.55em}}) and the fraction of infinite length prediction intervals (\textcolor{bisque}{\rule{0.55em}{0.55em}}). We highlight provably correct methods (\textcolor{pythonGreen}{\ding{52}}) and the target level (\dashedred). \emph{All metrics are averaged over $N = \num{1e6}$ runs}.}
\label{fig:mis-rule-a}
\end{figure}

We now proceed to the theoretical results of this section. First, we define the \emph{selection} procedure, which maps data to the augmented calibration indices 
\begin{align*}
\widetilde{\mathcal{I}}_t:\, \mathcal{Z}^{N+1} \rightarrow 2^{\Jt \cup \, \{t\}}: \nonumber  
(Z_{-n},\dots,Z_t) \mapsto \begin{cases} \mathcal{I}_t(Z_{-n},\dots, Z_t) \cup \{t\} &\text{ if } \, \Sr_t(X_t) = 1,\\
\mathcal{I}_t(Z_{-n},\dots, Z_t) &\text{ otherwise.}
\end{cases}
% \tag{S}
\end{align*}
where $\mathcal{I}_t(\cdot)$ denotes the calibration selection protocol \eqref{def:protocol}. 
So
\begin{equation}\label{eq:t-in-I-tilde-t}
t \in \widetilde{\mathcal{I}}_t \text{ if and only if } \Sr_t(X_t) = 1.
\end{equation}

Now, let $\widetilde{\mathcal{D}}_t := \widetilde{\mathcal{I}}_t(Z_{-n},\dots,Z_t)$ and let $\pi$ be a permutation of the indices in $\widetilde{\mathcal{D}}_t$. Define the extended permutation of the indices in $\{-n, \dots, t\}$ as 
\begin{align}\label{def:perm}
   \widetilde{\pi}(i) =
\begin{cases}
\pi(i) &{\rm{for}} \; i \in \widetilde{\mathcal{D}}_t, \\
i &{\rm{for}} \; i \not\in \widetilde{\mathcal{D}}_t.
\end{cases}
\end{align}
Whether or not $\{Z_i\}_{i \in \widetilde{\mathcal{D}}_t}$ (i.e., the selected calibration data, and the selected test datum) are exchangeable depends on the calibration selection protocol \eqref{def:protocol}.
Recall that $\{Z_i\}_{i \in \widetilde{\mathcal{D}}_t}$  are exchangeable if for any set of indices $D$ such that $\Prob(\widetilde{\mathcal{D}}_t = D) > 0$, and any permutation $\pi$ of indices in $D$ (equivalently, any extended permutation $\widetilde \pi$ as in~\eqref{def:perm}), and any measurable $A \subseteq (\mathcal{X} \times \mathcal{Y})^{|D|}$, we have 
\begin{align}
   \Prob\{(Z_i)_{i \in D} \in A \, | \, \widetilde{\mathcal{D}}_t = D \} 
   = \Prob\{(Z_{\widetilde \pi(i)} )_{i \in D} \in A \, | \, \widetilde{\mathcal{D}}_t = D \}. \label{def:cond-exchangeable}
\end{align}

To demonstrate~\eqref{def:cond-exchangeable}, it is sufficient to show that
\begin{align*}
\hspace{0.7cm} \Prob\{(Z_i)_{i \in D} \in A \, , \, \widetilde{\mathcal{I}}_t(Z_{-n}, \dots, Z_{t}) = D \}  
&= \Prob\{(Z_{\widetilde{\pi}(i)})_{i \in D} \in A \, , \, \widetilde{\mathcal{I}}_t(Z_{\widetilde{\pi}(-n)}, \dots, Z_{\widetilde{\pi}(t)}) = D \} \\[0.2cm]
&\marker{\star}{=} \Prob\{(Z_{\widetilde{\pi}(i)})_{i \in D} \in A \, , \, \widetilde{\mathcal{I}}_t(Z_{-n}, \dots, Z_{t}) = D \} .
\end{align*}
While the first equality follows from exchangeability of $(Z_{-n},\dots,Z_t)$, the important argument lies in step $(\star)$. 
Specifically, $(\star)$ holds true if 
\begin{align}\label{eq:symmetry}
 \widetilde{\mathcal{I}}_t(Z_{-n},\dots,Z_t) \, = \, \widetilde{\mathcal{I}}_t(Z_{\widetilde{\pi}(-n)},\dots, Z_{\widetilde{\pi}(t)}). \tag{S}
\end{align}

\begin{lemma}\label{lemma:selrule}
Let $\{\mathcal{S}_t\}_{t \geq 0}$ be a sequence of decision driven selection rules and denote by $\{\mathcal{S}^{\widetilde{\pi}}_t\}_{t \geq 0}$ the sequence of selection rules generated when operating on $Z_{\widetilde{\pi}(-n)},...,Z_{\widetilde{\pi}(t)}$. The selection rule sequences $\{\mathcal{S}_t\}_{t \geq 0}$ and $\{\mathcal{S}^{\widetilde{\pi}}_t\}_{t \geq 0}$ are identical, if $\mathcal{S}_j(X_j) = \mathcal{S}_j(X_{\widetilde{\pi}(j)})$ for all $j \in \Jto$.
\end{lemma}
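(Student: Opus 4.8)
The plan is to prove the identity of the two rule sequences by strong induction on the time index $j$, carrying two invariants simultaneously: that the rules coincide as functions, $\mathcal{S}_j = \mathcal{S}_j^{\widetilde{\pi}}$, and that the emitted decisions coincide, $S_j = S_j^{\widetilde{\pi}}$. Here $S_j = \mathcal{S}_j(X_j)$ is the decision produced on the original stream, while $S_j^{\widetilde{\pi}} = \mathcal{S}_j^{\widetilde{\pi}}(X_{\widetilde{\pi}(j)})$ is the decision produced at position $j$ when the same decision-driven procedure is run on the permuted stream $Z_{\widetilde{\pi}(-n)},\dots,Z_{\widetilde{\pi}(t)}$. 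The structural fact I would exploit is Definition~\ref{def:dec-sel}: since $\mathcal{S}_j$ is $\mathcal{F}_{j-1}$-measurable and $\mathcal{F}_{j-1} = \sigma(S_0,\dots,S_{j-1})$, by the functional representation of measurability there is a fixed, data-independent map $g_j$ with $\mathcal{S}_j = g_j(S_0,\dots,S_{j-1})$ and likewise $\mathcal{S}_j^{\widetilde{\pi}} = g_j(S_0^{\widetilde{\pi}},\dots,S_{j-1}^{\widetilde{\pi}})$. The generating map $g_j$ is the same in both worlds; only its binary inputs can differ.

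With this in hand the induction is routine. For the base case $j=0$, the rule $\mathcal{S}_0$ is $\mathcal{F}_{-1}$-measurable, hence deterministic, so $\mathcal{S}_0 = \mathcal{S}_0^{\widetilde{\pi}} = g_0$ trivially; the decisions are $S_0 = \mathcal{S}_0(X_0)$ and $S_0^{\widetilde{\pi}} = \mathcal{S}_0(X_{\widetilde{\pi}(0)})$, so the hypothesis at $j=0$ gives $S_0 = S_0^{\widetilde{\pi}}$. For the inductive step I would assume both invariants for every $i<j$. Rule agreement then follows because the past decision vectors coincide, so $\mathcal{S}_j = g_j(S_0,\dots,S_{j-1}) = g_j(S_0^{\widetilde{\pi}},\dots,S_{j-1}^{\widetilde{\pi}}) = \mathcal{S}_j^{\widetilde{\pi}}$. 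Decision agreement then follows from the rule agreement just obtained together with the feature-level hypothesis: $S_j^{\widetilde{\pi}} = \mathcal{S}_j^{\widetilde{\pi}}(X_{\widetilde{\pi}(j)}) = \mathcal{S}_j(X_{\widetilde{\pi}(j)}) = \mathcal{S}_j(X_j) = S_j$, where the middle equality is the hypothesis $\mathcal{S}_j(X_j) = \mathcal{S}_j(X_{\widetilde{\pi}(j)})$ valid for $j \in \Jto$. Running this over $j = 0,\dots,t-1$ establishes decision agreement on all of $\Jto$, and the rule-agreement half of the invariant then holds for every index up to $t$, which is exactly the asserted identity of the sequences.

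The argument involves no genuine calculation, so the only point that needs care is conceptual: correctly interleaving the two invariants and not conflating them. Rule agreement at step $j$ is inherited purely from \emph{decision} agreement at earlier steps, and this implication is precisely where decision-drivenness is indispensable---a rule that peeked directly at past features or labels rather than only at the past binary decisions would break it, since the permutation scrambles which datum sits at each position. Decision agreement at step $j$, by contrast, needs rule agreement at step $j$ plus the stated feature-level hypothesis. I would therefore be careful to pin down the meaning of $\mathcal{S}_j^{\widetilde{\pi}}$ at the outset---it is the rule emitted at position $j$ by the identical procedure fed the permuted stream---since the entire proof hinges on the two worlds sharing the same generating maps $g_j$ and differing only through their binary inputs.
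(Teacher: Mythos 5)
Your proposal is correct and takes essentially the same route as the paper's own proof: an induction on the time index in which the hypothesis $\mathcal{S}_j(X_j) = \mathcal{S}_j(X_{\widetilde{\pi}(j)})$ converts rule agreement at step $j$ into decision agreement, and decision-drivenness then converts decision agreement into rule agreement at step $j+1$. The paper states this induction only in sketch form (base case $j=0$, then ``the claim follows by induction''), whereas you make explicit the functional representation $\mathcal{S}_j = g_j(S_0,\dots,S_{j-1})$ and the two interleaved invariants—this is exactly the detail the paper's terse proof leaves implicit.
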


\begin{proof} 
Assume $\mathcal{S}_j(X_j) = \mathcal{S}_j(X_{\widetilde{\pi}(j)})$ for all $j \in \Jto$. Now, let $j = 0$. By assumption, we have $\mathcal{S}_0(X_0) = \mathcal{S}_0(X_{\widetilde{\pi}(0)})$. This yields $\mathcal{S}_1^{\widetilde{\pi}} \equiv  \mathcal{S}_1$; in other words the \emph{rules} at time $t = 1$ are identical. Then, the claim follows by induction. This completes the proof. 
\end{proof}

The following lemma demonstrates that the selection procedure, when instantiated with strategies \eqref{strategy:full}, \eqref{strategy:sfull}, or \eqref{strategy:ada}, does not necessarily satisfy symmetry \eqref{eq:symmetry}. While this may be apparent for the full and selection-full calibration selection strategy, we explicitly demonstrate that symmetry also \emph{fails} for the recently proposed adaptive rule.

\begin{lemma}\label{lemma:calib:01}
Without additional assumptions, the selection  strategies \eqref{strategy:full}, \eqref{strategy:sfull}, and \eqref{strategy:ada} do not satisfy symmetry \eqref{eq:symmetry}. 
\end{lemma}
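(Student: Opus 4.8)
The plan is to prove the negative statement by constructing explicit counterexamples: for each of the three strategies I need to exhibit a concrete instance (a decision-driven selection rule, an index set $D$ with $\Prob(\widetilde{\mathcal{D}}_t = D) > 0$, and a permutation $\widetilde\pi$ as in~\eqref{def:perm}) for which symmetry~\eqref{eq:symmetry} fails, i.e., $\widetilde{\mathcal{I}}_t(Z_{-n},\dots,Z_t) \neq \widetilde{\mathcal{I}}_t(Z_{\widetilde{\pi}(-n)},\dots,Z_{\widetilde{\pi}(t)})$. Since the claim is ``without additional assumptions,'' a single small-$t$ witness per strategy suffices; I would use the smallest online horizon that exposes each failure mode and take $\X = \mathbb{R}$ with a threshold-type rule so the computations are transparent.

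For \eqref{strategy:full} (\texttt{FULL}), the failure is essentially immediate: the calibration set $\mathcal{D}_t = \Jt$ is fixed regardless of the data, whereas the test index $t$ enters $\widetilde{\mathcal{D}}_t$ only when $\Sr_t(X_t)=1$. I would pick a decision-driven rule and a permutation $\widetilde\pi$ swapping index $t$ with some online index $j$ whose feature value flips the selection outcome $\Sr_t(\cdot)$; then the left side has $t \in \widetilde{\mathcal{I}}_t$ (via~\eqref{eq:t-in-I-tilde-t}) but the permuted data gives $\Sr_t(X_{\widetilde\pi(t)})=0$, so $t \notin \widetilde{\mathcal{I}}_t(Z_{\widetilde\pi(-n)},\dots)$, breaking~\eqref{eq:symmetry}. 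The same swap argument, adapted so that the selection rule's threshold depends on $\sum_i S_i$, handles \eqref{strategy:sfull} (\texttt{S-FULL}): here permuting data can change which online points satisfy $\Sr_t(X_j)=1$, so the calibration membership of an online index is not invariant under $\widetilde\pi$.

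The main obstacle — and the genuinely new content — is \eqref{strategy:ada} (\texttt{ADA}), since the authors stress that its failure is non-obvious. For \texttt{ADA} the online membership indicator is $\Sr_t(X_j)\,\Ind{\Sr_j(X_j)=\Sr_j(X_t)}$, and the extra indicator $\Ind{\Sr_j(X_j)=\Sr_j(X_t)}$ is precisely designed to try to restore symmetry, so I must find an instance where applying $\widetilde\pi$ changes this factor. The key is to exploit that $\widetilde\pi$ permutes indices \emph{inside} $\widetilde{\mathcal{D}}_t$ while the rule $\Sr_j$ at an intermediate online time $j$ depends, through $\mathcal{F}_{j-1}$, on \emph{earlier} selection decisions that the permutation can alter. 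Concretely, I would take $t$ small (say the online horizon large enough to include at least two online calibration candidates plus the test point), design the decision-driven rule so that $\Sr_j$'s threshold moves with $\sum_{i<j} S_i$, and choose $\widetilde\pi$ that permutes a pair of in-$D$ indices so that for some $j \in \Jto$ the comparison $\Sr_j(X_j) = \Sr_j(X_t)$ holds under the original ordering but fails under the permuted ordering (or the rule $\Sr_j$ itself changes, as quantified by Lemma~\ref{lemma:selrule}). This flips the indicator and removes $j$ from the permuted calibration set, violating~\eqref{eq:symmetry}.

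To organize the \texttt{ADA} case cleanly I would invoke Lemma~\ref{lemma:selrule}: symmetry~\eqref{eq:symmetry} can only hold if the permuted rule sequence coincides with the original, which in turn requires $\Sr_j(X_j)=\Sr_j(X_{\widetilde\pi(j)})$ for all $j \in \Jto$. The counterexample should therefore be built so that this equality is \emph{violated} for at least one online $j$ under the chosen $\widetilde\pi$ — then Lemma~\ref{lemma:selrule}'s hypothesis fails, the rule sequences diverge, and I can read off an index whose \texttt{ADA} membership is not preserved. I would present the three witnesses as a short itemized construction (one display of the rule, the data/feature values, and the swap per case), verify membership on both sides of~\eqref{eq:symmetry} by direct substitution, and conclude. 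The only delicate bookkeeping is ensuring the chosen index set $D$ genuinely has positive probability and that the permutation is a valid permutation of $D$ in the sense of~\eqref{def:perm}; these are routine once the feature values are fixed to lie in the appropriate threshold regions.
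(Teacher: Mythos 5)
Your proposal is correct in substance, and for \eqref{strategy:full} and \eqref{strategy:sfull} it is essentially the paper's argument: a transposition involving the test index, whose selection status (or the rule sequence it induces) is not preserved under the swap. For \eqref{strategy:ada}, however, you take a genuinely different route. The paper swaps the \emph{test} index $t$ with a calibration index $s \in \widetilde{\mathcal{D}}_t$; under this particular swap, ADA's own indicator guarantees $\Sr_s(X_s) = \Sr_s(X_t)$, so Lemma~\ref{lemma:selrule} applies and the rule sequences $\{\Sr_i\}$ and $\{\Sr_i^{\widetilde{\pi}}\}$ remain \emph{identical}; the failure is then localized in the cross-comparison for a third calibration index $j$: symmetry would force $\Sr_j(X_j) = \Sr_j(X_s)$, which ADA never enforces (it only gives $\Sr_j(X_j)=\Sr_j(X_t)$ and $\Sr_s(X_s)=\Sr_s(X_t)$). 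Your plan instead swaps two calibration indices $j \leftrightarrow k$, neither equal to $t$, so as to violate the hypothesis of Lemma~\ref{lemma:selrule} and make the decision sequences, and hence the later rules, diverge under the permutation. Both failure modes are real and both yield the lemma. The paper's choice is the sharper diagnostic, since it shows ADA fails \emph{even in the regime it was engineered for} (test--calibration swaps with preserved rule sequences); your construction has the complementary merit of producing a fully explicit numerical counterexample, something the paper itself supplies only indirectly, via Simulation~\ref{ill:rules}, after noting that \eqref{eq:symmetry} is sufficient but not necessary for exchangeability.

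One caution: you invoke Lemma~\ref{lemma:selrule} in the converse direction (``hypothesis fails $\Rightarrow$ rule sequences diverge''). The lemma is one-directional, and its converse is false in general --- a selection rule that ignores past decisions yields identical rule sequences no matter how the data are permuted. What is actually true, and what your construction needs, is that at the \emph{first} online time $j$ with $\Sr_j(X_j) \neq \Sr_j(X_{\widetilde{\pi}(j)})$ the rules still coincide (by the inductive argument inside Lemma~\ref{lemma:selrule}), so the \emph{decisions} diverge there; you must then choose a rule that genuinely reacts to that flipped decision --- your running-sum threshold does --- and verify by direct substitution that $\widetilde{\mathcal{I}}_t(Z_{\widetilde{\pi}(-n)},\dots,Z_{\widetilde{\pi}(t)}) \neq \widetilde{\mathcal{I}}_t(Z_{-n},\dots,Z_t)$. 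With that restatement, and with the concrete feature values actually written down (you defer this, but it is routine for threshold rules), your argument goes through.
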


\begin{proof}
Let $\{\mathcal{S}^{\widetilde{\pi}}_t\}_{t \geq 0}$ be the sequence of selection rules generated when operating on $Z_{\widetilde{\pi}(-n)},...,Z_{\widetilde{\pi}(t)}$. Now, let the permutation $\pi$ be a transposition such that $\pi(t) = s$, $\pi(s) = t$ and $\pi(j) = j$ for $j \neq s,t$. Fix $j \in \widetilde{\mathcal{D}}_t \, \cap \, \Jto$. For strategies \eqref{strategy:full}, and \eqref{strategy:sfull} the claim follows immediately, since we have $\mathcal{S}_t(X_t) \neq \mathcal{S}^{\widetilde{\pi}}_t(X_s)$ without additional assumptions. By definition of strategy \eqref{strategy:ada}, $\mathcal{S}_t(X_j) \Ind{\mathcal{S}_j(X_j) = \mathcal{S}_j(X_t)} = 1.$ If symmetry       \eqref{eq:symmetry} holds, we expect that
\begin{align}
    \mathcal{S}^{\widetilde{\pi}}_t(X_{\pi(j)}) \Ind{\mathcal{S}^{\widetilde{\pi}}_j(X_{\pi(j)}) = \mathcal{S}^{\widetilde{\pi}}_j(X_{\pi(t)})} = 1.
    \label{eq1:adaptive-rule}
\end{align}
Again, by definition of \eqref{strategy:ada}, we have $\Sr_s(X_s) = \Sr_s(X_t)$. Hence, Lemma~\ref{lemma:selrule} ensures that the selection sequences $\{\Sr_t\}_{t \geq 0}$ and $\{\Sr^{\widetilde{\pi}}_t \}_{t \geq 0}$ are indeed identical. Consequently, display \eqref{eq1:adaptive-rule} can be reformulated as 
\begin{align}
    \mathcal{S}_t(X_{\pi(j)}) \Ind{\mathcal{S}_j(X_{\pi(j)}) = \mathcal{S}_j(X_{\pi(t)})} = 1.
    \label{eq2:adpative-rule}
\end{align}
Given that $\pi(j) \in \widetilde{\mathcal{D}}_t$, it follows by definition of the extended permutation that $\Sr_t(X_{\pi(j)}) = 1$.  However, the issue in \eqref{eq2:adpative-rule} lies in the fact that
\begin{align*}
   \mathcal{S}_j(X_{\pi(j)}) = \mathcal{S}_j(X_{\pi(t)}) 
\end{align*}
\emph{cannot} be guaranteed without additional assumptions. This completes the proof. 
\end{proof}

What Lemma \ref{lemma:calib:01} shows is that the selection strategies \eqref{strategy:full}, \eqref{strategy:sfull}, and \eqref{strategy:ada} do not satisfy symmetry \eqref{eq:symmetry}, in general. However, note that \eqref{eq:symmetry} is a \emph{sufficient} but not \emph{necessary} condition for $(\star)$ in the above display. This leaves open the possibility that these strategies might still satisfy $(\star)$ in certain cases. Nevertheless, as demonstrated by the counterexample in Simulation \ref{ill:rules}, they do not satisfy it in general.

Now we turn to our positive results. Specifically, we show that the exchangeability-preserving strategies satisfy symmetry, thereby implying exchangeability and, via classical conformal arguments (as demonstrated in Appendix \ref{app:proofs}), selection-conditional coverage.
\begin{lemma}\label{lemma:calib02}
The selection procedure instantiated with strategies \eqref{strategy:sfix}, \eqref{strategy:expres} or \eqref{strategy:kexp} satisfies symmetry \eqref{eq:symmetry}. Thus, the random variables $\{ Z_i \}_{i \in \widetilde{\mathcal{D}}_t}$ are exchangeable.
\end{lemma}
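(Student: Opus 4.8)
The plan is to establish the symmetry condition \eqref{eq:symmetry} separately for each of the three strategies; exchangeability of $\{Z_i\}_{i \in \widetilde{\mathcal{D}}_t}$ then follows from the $(\star)$ computation derived just before the statement. The engine driving all three cases is Lemma~\ref{lemma:selrule}: once I verify its hypothesis $\mathcal{S}_j(X_j) = \mathcal{S}_j(X_{\widetilde{\pi}(j)})$ for every $j \in \Jto$, the permuted rule sequence $\{\mathcal{S}^{\widetilde{\pi}}_t\}$ coincides with $\{\mathcal{S}_t\}$, and I can then argue that the selected index set is itself permutation-invariant.

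The first step is to check the hypothesis of Lemma~\ref{lemma:selrule}. For $j \in \Jto \setminus \widetilde{\mathcal{D}}_t$ the extended permutation fixes $j$, so $\widetilde{\pi}(j) = j$ and there is nothing to prove. The informative case is $j \in \Jto \cap \widetilde{\mathcal{D}}_t$, i.e. $j$ is an online calibration index in $\mathcal{D}_t$. Here the definition of the strategy forces $X_j$ to share the relevant selection profile of $X_t$: for \eqref{strategy:expres} one has $\mathcal{S}_i(X_j) = \mathcal{S}_i(X_t)$ for all $i \in \Jto$, and for \eqref{strategy:kexp} the same holds for $i \in \{t-k,\dots,t-1\}$ (for \eqref{strategy:sfix} this case is vacuous, since $\widetilde{\mathcal{D}}_t \cap \Jto = \emptyset$ and the online features are never permuted). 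Taking $i = j$ yields $\mathcal{S}_j(X_j) = \mathcal{S}_j(X_t)$, and since $\widetilde{\pi}(j) = \pi(j)$ again lies in $\widetilde{\mathcal{D}}_t$, the same membership gives $\mathcal{S}_j(X_{\pi(j)}) = \mathcal{S}_j(X_t)$ (immediate when $\pi(j) = t$). Chaining these equalities produces $\mathcal{S}_j(X_j) = \mathcal{S}_j(X_{\widetilde{\pi}(j)})$, so Lemma~\ref{lemma:selrule} applies and $\mathcal{S}^{\widetilde{\pi}}_i = \mathcal{S}_i$ for every $i$.

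Having fixed the rules, the second step is to show that running the protocol on the permuted data reselects exactly the same indices, i.e. $\widetilde{\mathcal{I}}_t(Z_{\widetilde{\pi}(-n)},\dots,Z_{\widetilde{\pi}(t)}) = \widetilde{\mathcal{D}}_t$. I would first record the key structural fact that every $\ell \in \widetilde{\mathcal{D}}_t$ is \emph{selection-equivalent} to $t$: one has $\mathcal{S}_t(X_\ell) = 1$ and $\mathcal{S}_i(X_\ell) = \mathcal{S}_i(X_t)$ for every $i$ appearing in the strategy's product. Then, for a candidate index $j$, I compare $\mathcal{K}_t(X_j)$ with the permuted selection $\mathcal{K}^{\widetilde{\pi}}_t(X_{\widetilde{\pi}(j)})$ (using $\mathcal{S}^{\widetilde{\pi}} = \mathcal{S}$), splitting on whether $j \in \widetilde{\mathcal{D}}_t$, and I treat the test datum $t$ through \eqref{eq:t-in-I-tilde-t}.

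The main obstacle, and the only genuinely delicate point, sits inside this second step: on the permuted data the strategy compares each candidate against the feature now sitting in slot $t$, namely $X_{\widetilde{\pi}(t)}$ rather than $X_t$. The indicator factors of \eqref{strategy:expres}/\eqref{strategy:kexp} therefore read $\Ind{\mathcal{S}_i(X_{\widetilde{\pi}(j)}) = \mathcal{S}_i(X_{\widetilde{\pi}(t)})}$, and I must verify they agree with the original factors $\Ind{\mathcal{S}_i(X_j) = \mathcal{S}_i(X_t)}$. The resolution is exactly the selection-equivalence recorded above: when $\mathcal{S}_t(X_t) = 1$ we have $\widetilde{\pi}(t) = \pi(t) \in \widetilde{\mathcal{D}}_t$, so $\mathcal{S}_i(X_{\widetilde{\pi}(t)}) = \mathcal{S}_i(X_t)$ for every relevant $i$; and when $\mathcal{S}_t(X_t) = 0$ the permutation fixes $t$, so $X_{\widetilde{\pi}(t)} = X_t$ outright. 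In either case replacing $X_t$ by $X_{\widetilde{\pi}(t)}$ leaves every indicator unchanged, and the same membership argument shows $\mathcal{K}_t(X_j) = \mathcal{K}^{\widetilde{\pi}}_t(X_{\widetilde{\pi}(j)})$ and, via \eqref{eq:t-in-I-tilde-t}, that $t$ is reselected if and only if it was originally. This yields $\widetilde{\mathcal{I}}_t(Z_{\widetilde{\pi}(-n)},\dots,Z_{\widetilde{\pi}(t)}) = \widetilde{\mathcal{D}}_t$, hence \eqref{eq:symmetry}, and exchangeability then follows from the $(\star)$ computation preceding the lemma.
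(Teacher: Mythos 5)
Your proposal is correct and follows essentially the same route as the paper's proof: extract the selection-equivalence of all indices in $\widetilde{\mathcal{D}}_t$ with $t$ from the strategy definition, invoke Lemma~\ref{lemma:selrule} to identify $\{\mathcal{S}^{\widetilde{\pi}}_i\}$ with $\{\mathcal{S}_i\}$, and then substitute to conclude the permuted protocol selects the same index set. If anything, your write-up is more explicit than the paper's on two points it leaves implicit --- verifying the hypothesis of Lemma~\ref{lemma:selrule} index by index, and checking that replacing $X_t$ by $X_{\widetilde{\pi}(t)}$ (including the $\mathcal{S}_t(X_t)=0$ corner case via \eqref{eq:t-in-I-tilde-t}) leaves every indicator unchanged --- so no gap to report.
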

\begin{proof}
Due to Assumption \ref{assumption:a} the claim for strategy \eqref{strategy:sfix} holds trivially true. For strategy \eqref{strategy:expres}, consider any arbitrary permutation $\pi$ and let $j \in \widetilde{\mathcal{D}}_t$. By definition, it follows that
\begin{align}
    \Sr_t(X_j) \prod_{i = 0}^{t-1} \Ind{\Sr_i(X_j) = \Sr_i(X_t)} = 1,
    \label{eq:ex-preserve}
\end{align}
meaning that all the above indicators are equal to one.
Applying the same observation to $k \in \widetilde{\mathcal{D}}_t$, we see that
\begin{align}
    \forall j, k \in \widetilde{\mathcal{D}}_t: \; \forall i \geq 0: \Sr_i(X_j) = \Sr_i(X_{k}) =  \Sr_i(X_{t}).
    \label{eq:rules-equal}
\end{align}
Then, by Lemma~\ref{lemma:selrule} we know that the selection rule sequences $\{\Sr_t\}_{t \geq 0}$ and $\{\Sr^{\widetilde{\pi}}_t\}_{t \geq 0}$ are identical. Consequently, we can rewrite \eqref{eq:ex-preserve} as follows:
\begin{align*}
   \Sr^{\widetilde{\pi}}_t(X_j) \prod_{i = 0}^{t-1} \Ind{\Sr^{\widetilde{\pi}}_i(X_j) = \Sr^{\widetilde{\pi}}_i(X_t)} = 1.
\end{align*}
% Again, by \eqref{eq:rules-equal} we have 
Since $\pi(j) \in \widetilde{\mathcal{D}}_t$, we also have $S_i(X_j) = S_i(X_{\pi(j)})$ from \eqref{eq:rules-equal}. Thus, 
\begin{align*}
  \Sr^{\widetilde{\pi}}_t(X_{\pi(j)}) \prod_{i = 0}^{t-1} \Ind{\Sr^{\widetilde{\pi}}_i(X_{\pi(j)}) = \Sr^{\widetilde{\pi}}_i(X_{\pi(t)})} = 1.
\end{align*}
Analogous reasoning yields the claim for strategy \eqref{strategy:kexp}. Since \eqref{eq:symmetry} is sufficient for $(\star)$, the random variables $\{ Z_i \}_{i \in \widetilde{\mathcal{D}}_t}$ are exchangeable. This completes the proof.
\end{proof}

In fact, we can show that online conformal inference with the exchangeability-preserving strategies ensures an even  \emph{stronger} notion of selection-conditional coverage:

\begin{theorem}\label{thm:main}
Online selective conformal inference with strategies \eqref{strategy:sfix}, \eqref{strategy:expres} or \eqref{strategy:kexp} produces strong selection-conditional prediction intervals. Specifically, we have
\begin{align*}
\Prob \left\{ Y_t \in \widehat{C}_t(X_t) \mid S_0 = s_0, \dots, S_{t-1} = s_{t-1}, S_t=1 \right\} \geq 1-\alpha
\end{align*}
for any $t \geq 0$ and any $(s_0,\dots,s_{t-1}) \in \{0,1\}^{t-1}$, when $\Prob(S_0 = s_0,\dots, S_{t-1} = s_{t-1}, S_t = 1) > 0$. Thus, selection-conditional coverage \eqref{eq:sel-cond} is fulfilled.
\end{theorem}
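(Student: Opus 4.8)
The plan is to leverage Lemma~\ref{lemma:calib02}, which already establishes that under strategies~\eqref{strategy:sfix}, \eqref{strategy:expres}, or~\eqref{strategy:kexp}, the random variables $\{Z_i\}_{i \in \widetilde{\mathcal{D}}_t}$ are exchangeable conditional on the selected index set. The bulk of the work is therefore to upgrade this \emph{conditional exchangeability} into a coverage statement, and in particular to show that the conditioning event in the theorem---fixing the entire trajectory $S_0 = s_0, \dots, S_{t-1} = s_{t-1}$ \emph{and} $S_t = 1$---is compatible with, and indeed refines, the conditioning on $\widetilde{\mathcal{D}}_t = D$ used in the exchangeability definition~\eqref{def:cond-exchangeable}.

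First I would fix an arbitrary admissible trajectory $(s_0,\dots,s_{t-1})$ with $S_t = 1$ and observe that, by~\eqref{eq:t-in-I-tilde-t}, the event $\{S_t = 1\}$ is exactly the event $\{t \in \widetilde{\mathcal{D}}_t\}$. The key structural point, which I would spell out, is that for the exchangeability-preserving strategies the membership of the test index $t$ in the calibration set $\widetilde{\mathcal{D}}_t$, together with the past selection trajectory, determines $\widetilde{\mathcal{D}}_t$ in a way that treats $t$ symmetrically with the other selected indices: every $j \in \widetilde{\mathcal{D}}_t$ satisfies $\Sr_i(X_j) = \Sr_i(X_t)$ for all $i \geq 0$ by~\eqref{eq:rules-equal}, so conditioning on the trajectory does not single out $t$ among the members of $\widetilde{\mathcal{D}}_t$. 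I would then argue that, because the selection rules are decision driven (hence $\mathcal{F}_{t-1}$-measurable) and, by Assumption~\ref{assumption:a}, independent of the offline data, conditioning on $\{S_0 = s_0,\dots,S_{t-1}=s_{t-1},S_t=1\}$ is equivalent to conditioning on a particular realization of $\widetilde{\mathcal{D}}_t = D$ (or a union of such events), so the conditional exchangeability of Lemma~\ref{lemma:calib02} is inherited by the finer conditioning.

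Given exchangeability of $\{Z_i\}_{i \in \widetilde{\mathcal{D}}_t}$ conditional on the selection event, the scores $\{R_i\}_{i \in \widetilde{\mathcal{D}}_t}$ are themselves exchangeable (as $R$ is a fixed measurable function applied coordinatewise). I would then invoke the classical split-conformal quantile argument: for exchangeable scores, the rank of the test score $R_t = R(X_t, Y_t)$ among $\{R_i\}_{i \in \widetilde{\mathcal{D}}_t}$ is uniform, so
\begin{align*}
\Prob\left\{ R_t \leq \widehat{Q}_{1-\alpha}(\{R_i\}_{i \in \mathcal{D}_t}) \mid \widetilde{\mathcal{D}}_t = D \right\} \geq 1 - \alpha,
\end{align*}
where $\mathcal{D}_t = \widetilde{\mathcal{D}}_t \setminus \{t\}$ is the calibration set. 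Since $\{R_t \leq \widehat{Q}_{1-\alpha}\}$ is precisely the event $\{Y_t \in \widehat{C}_t(X_t)\}$ by the definition of the prediction interval in Step~5 of the procedure, and since this holds for the conditioning event identified above, the claimed bound follows; averaging (or rather, since the bound holds for each fixed admissible trajectory, directly) yields selection-conditional coverage~\eqref{eq:sel-cond}.

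The main obstacle I anticipate is the careful bookkeeping in the second paragraph: making rigorous the claim that conditioning on the explicit trajectory $\{S_0 = s_0,\dots,S_t=1\}$ does not disturb the symmetry that underlies exchangeability. One must check that the trajectory event is measurable with respect to the same information used to define $\widetilde{\mathcal{D}}_t$ and that it does not break the permutation symmetry~\eqref{eq:symmetry}---intuitively, because all members of $\widetilde{\mathcal{D}}_t$ share the identical selection history~\eqref{eq:rules-equal}, permuting them leaves the trajectory invariant. A secondary technical point is handling ties in the empirical quantile and the edge case where $\mathcal{D}_t$ is empty (which can occur under \texttt{EXPRESS}); in the latter case the quantile convention should be taken so that $\widehat{C}_t(X_t) = \Y$, trivially satisfying coverage.
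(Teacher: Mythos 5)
Your overall route is the same as the paper's: upgrade the symmetry established in Lemma~\ref{lemma:calib02} to exchangeability under the theorem's conditioning, then finish with the classical conformal argument (the paper phrases this step via the conformal p-value $p_{\widetilde{\mathcal{D}}_t}$ in Lemma~\ref{lemma:p-valid-strong}; your quantile/rank formulation is equivalent). The genuine problem is your bridging step. You claim that conditioning on $\{S_0=s_0,\dots,S_{t-1}=s_{t-1},S_t=1\}$ is ``equivalent to conditioning on a particular realization of $\widetilde{\mathcal{D}}_t=D$ (or a union of such events).'' This is false: the two conditionings carry different information. Knowing $\widetilde{\mathcal{D}}_t=D$ constrains only which indices pass the calibration strategy; it leaves the selection decisions at times outside $D$ undetermined, so a single event $\{\widetilde{\mathcal{D}}_t=D\}$ meets many distinct trajectory events, and no trajectory event is a union of events of the form $\{\widetilde{\mathcal{D}}_t=D\}$. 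Consequently your displayed coverage bound, which conditions only on $\{\widetilde{\mathcal{D}}_t=D\}$ (this is all that Lemma~\ref{lemma:calib02} delivers), does not by itself give the trajectory-conditional statement of the theorem: exchangeability conditional on a coarser event is not inherited by a finer conditioning event in general, so the refinement must be argued, not assumed.

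The repair---and this is exactly what the paper's Lemma~\ref{lemma:strong-ex} does---is to condition on the \emph{intersection} $\mathcal{E}=\{\widetilde{\mathcal{D}}_t=D\}\cap\{S_0=s_0,\dots,S_{t-1}=s_{t-1}\}$ (the constraint $S_t=1$ is subsumed by $t\in D$ via~\eqref{eq:t-in-I-tilde-t}), and to show that $\mathcal{E}$, viewed as a set of data sequences, is invariant under every extended permutation $\widetilde{\pi}$: the calibration-selection part is invariant by the symmetry~\eqref{eq:symmetry} proved in Lemma~\ref{lemma:calib02}, and the trajectory part is invariant because all members of $\widetilde{\mathcal{D}}_t$ share identical selection histories~\eqref{eq:rules-equal}, so by Lemma~\ref{lemma:selrule} permuting within $\widetilde{\mathcal{D}}_t$ changes no decision $S_j$ for $j\le t-1$. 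Exchangeability of the underlying data then gives exchangeability of $\{Z_i\}_{i\in\widetilde{\mathcal{D}}_t}$, hence of the scores, conditional on each such $\mathcal{E}$; the quantile (or p-value) argument gives coverage at least $1-\alpha$ on each $\mathcal{E}$; and averaging over $D$ with weights $\Prob(\widetilde{\mathcal{D}}_t=D\mid S_0=s_0,\dots,S_{t-1}=s_{t-1},S_t=1)$ yields the theorem. Your closing remark---that permuting members of $\widetilde{\mathcal{D}}_t$ leaves the trajectory invariant because of~\eqref{eq:rules-equal}---is precisely the needed ingredient; it must be used to prove invariance of the joint event, not to assert an equivalence of conditioning events. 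Your treatment of ties and of the empty-calibration-set case (interval defaults to $\Y$) is fine and consistent with the paper's p-value formulation.
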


Theorem \ref{thm:main} is proved in Appendix \ref{app:proofs}. Since Theorem~\ref{thm:main} ensures (strong) selection-conditional validity for strategies \eqref{strategy:sfix} and \eqref{strategy:expres}, applying the Bonferroni inequality \citep{bonferroni1936teoria} guarantees that the intersection of the corresponding prediction intervals remains valid. Hence, the \emph{merging} strategy also produces (strong) selection-conditional prediction intervals.

To summarize, we have shown that while (most) existing strategies fail to ensure selection-conditional coverage, our novel proposals successfully achieve it. An equally important takeaway is that achieving selection-conditional coverage is \emph{nontrivial}, as exchangeability can be violated in many ways, necessitating a fully symmetric strategy -- one that, as we have shown, can be quite restrictive (in terms of selecting suitable calibration data).

%% FCR Control
\section{FCR Control}\label{sec:fcr}
Another metric for evaluating the errors in reported prediction intervals is the false coverage rate (FCR), as defined in \eqref{def:fcr}. We evaluate whether online selective conformal prediction, with both existing and novel strategies, effectively controls the FCR below a specified target level.

\begin{illustration}\label{sim:fcr}
The data generation follows the same setup as in Simulation \ref{ill:rules}. Here, however, we consider a longer time horizon by choosing $N_{\rm{off}} = 50$ and $N_{\rm{on}} = 200$. For $t \geq 0$, the selection rule is given by
\begin{align}\label{def:rule-b}
x \mapsto \mathbbm{1}\left\{ x < \tau_1 + \frac{1}{\tau_0} \sum_{i=0}^{t-1} S_i \right\},
\end{align}
where we choose $\tau_0 = 200$ and $\tau_1 = 1$.
\end{illustration}

\begin{figure*}[t!]
    \centering
    \includegraphics[width=1\linewidth]{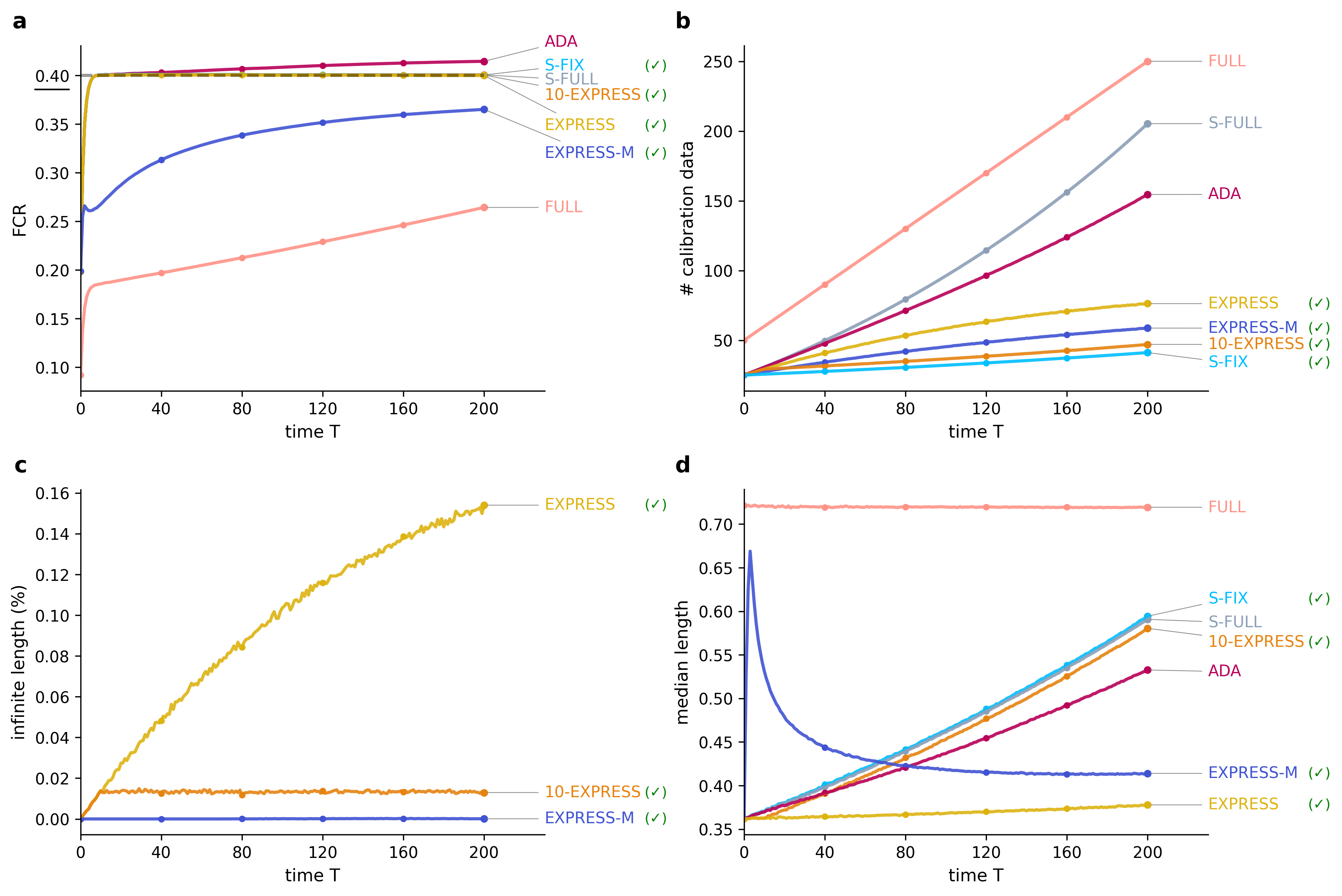}
    \caption{\textbf{(a)} FCR as a function of time $T$. The dashed black line represents the target level $\alpha = 0.4$. We highlight provably correct methods (\textcolor{pythonGreen}{\ding{52}}). \textbf{(b)} Number of calibration data points used over time. Strategies accumulating more calibration data tend to yield shorter prediction intervals. \textbf{(c)} Fraction of prediction intervals of infinite length over time. A high fraction suggests a strategy often fails to provide informative intervals. Only reported for novel strategies. \textbf{(d)} Median prediction interval length over time. Shorter intervals indicate higher informativeness. \emph{All metrics are averaged over  $N = \num{1e4}$ runs.}}
    \label{fig:fcr-main}
\end{figure*}

\emph{Results.}  Figure \ref{fig:fcr-main} provides a comparison of existing and novel strategies in terms of FCR control, growth of calibration sets, and the informativeness of prediction intervals over time. We discuss the results in the following:

Subplot \textbf{(a)} illustrates the FCR as time progresses. All novel strategies successfully control the FCR at the target level $\alpha = 0.4$, while \texttt{ADA} fails to do so. Although other existing strategies appear to control FCR in this specific simulation, in Appendix \ref{app:sim} we present cases where they do not. This highlights a key insight: depending on the data-generating process and selection rule, some strategies may incidentally control FCR, even though they do not provably do so. 

Subplot \textbf{(b)}  reports the number of calibration points over time. As expected, all strategies exhibit their characteristic behavior:  \texttt{S-FIX} selects calibration points exclusively from the offline set, preventing any growth beyond its initial size $N_{\rm{off}} = 50$, while \texttt{FULL} and \texttt{S-FULL} accumulate calibration points unrestrictedly. Notably, \texttt{EXPRESS-M} emerges as a compromise between \texttt{EXPRESS} and \texttt{S-FIX}, balancing the benefits of a more restrictive approach that leverages online data with the need for a sufficiently large calibration set to construct informative prediction intervals.

Subplot \textbf{(c)} reports the fraction of prediction intervals that are of infinite length. The \texttt{EXPRESS} strategy exhibits a steadily increasing fraction of such intervals over time, highlighting its  restrictive nature, which often leaves too few calibration points to construct finite prediction intervals. In contrast, variants such as \texttt{EXPRESS-M} and \texttt{10-EXPRESS}, demonstrate substantial improvements. This highlights their viability as more flexible alternatives that control FCR while ensuring the prediction intervals remain informative.

Subplot \textbf{(d)} shows the median length of prediction intervals as a function of time. Although \texttt{FULL} accumulates the largest calibration set, it produces the widest intervals because it includes all previous examples without considering the selection mechanism, leading clearly to selection bias. In this simulation, \texttt{S-FULL}, \texttt{S-FIX}, \texttt{10-EXPRESS}, and \texttt{ADA} all exhibit a steady increase in median interval size. By contrast, \texttt{EXPRESS-M} gradually converges to the behavior of \texttt{EXPRESS}, which maintains the shortest intervals overall. However, it is important to note that \texttt{EXPRESS} incurs an increasing fraction of infinite intervals (as seen in subplot \textbf{(c)}), which is not reflected by the median lengths. In contrast, \texttt{EXPRESS-M} manages to avoid these infinite intervals while still preserving relatively short intervals, providing a more practical alternative.

Overall, these results highlight the trade-offs in calibration selection strategies. Methods that strictly enforce exchangeability, such as \texttt{EXPRESS}, offer theoretical guarantees but at the cost of increasingly frequent infinite-length intervals. More flexible approaches like \texttt{EXPRESS-M} and \texttt{10-EXPRESS} mitigate this issue by maintaining a larger calibration set while still controlling FCR. Notably, strategies that ignore selection bias, such as \texttt{FULL}, \texttt{S-FULL}, and \texttt{ADA} may appear reasonable in some cases but lack theoretical guarantees, making them unreliable in general. 

In fact, even the conditional expectation of $\FCP$ given that at least one selection is made,
\begin{align}
    \pFCR(T) =  \E\left[ \FCP(T) \, \Big| \, \sum_i S_i > 0 \right] 
\end{align}
is controlled when using our proposed strategies. The above metric is called \emph{positive} FCR \citep{weinstein2020online}, in analogy to the positive false discovery rate (FDR) \citep{storey2003positive}.

\begin{proposition}\label{prop:fcr}
Strong selection-conditional coverage implies that $\pFCR(T) \leq \alpha$ for any $T \geq 0$. Moreover, if strong selection-conditional coverage holds exactly, then $\pFCR(T) = \alpha$. Thus, calibration selection strategies \eqref{strategy:sfix}, \eqref{strategy:expres} or \eqref{strategy:kexp} guarantee that $\FCR(T) \leq \alpha$ for any $T \geq 0$. If $\sum_i S_i > 0$ holds almost surely, then $\FCR(T) = \alpha$ under the same exact coverage condition. 
\end{proposition}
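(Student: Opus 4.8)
The plan is to control $\pFCR$ first and then transfer everything to $\FCR$ through one clean identity. Write $\Sigma_T := \sum_{t=0}^{T} S_t$ for the number of selections and $M_t := \Ind{Y_t \notin \widehat{C}_t(X_t)}$ for the miscoverage indicator. The starting observation is that $\FCP(T) = 0$ on the event $\{\Sigma_T = 0\}$ (the numerator vanishes while the denominator equals $1$), so that
\begin{align*}
\FCR(T) = \E[\FCP(T)] = \E[\FCP(T) \mid \Sigma_T > 0]\,\Prob(\Sigma_T > 0) = \pFCR(T)\,\Prob(\Sigma_T > 0).
\end{align*}
Since $\Prob(\Sigma_T > 0) \le 1$, the bound $\pFCR(T) \le \alpha$ immediately yields $\FCR(T) \le \alpha$; moreover, if $\Sigma_T > 0$ almost surely then $\Prob(\Sigma_T > 0) = 1$ and $\FCR(T) = \pFCR(T)$, so an exact value of $\pFCR$ transfers verbatim to $\FCR$. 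This reduces the proposition to two claims: $\pFCR(T) \le \alpha$, with equality under exact coverage.

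To bound $\pFCR$, I would condition on the entire selection trajectory $\mathbf S := (S_0,\dots,S_T)$. On an atom $\{\mathbf S = \mathbf s\}$ with selected set $A := \{t : s_t = 1\}$ and $|A| = r \ge 1$, the denominator $1 \vee \Sigma_T = r$ is \emph{constant}, whence
\begin{align*}
\E[\FCP(T) \mid \mathbf S = \mathbf s] = \frac{1}{r} \sum_{t \in A} \Prob(M_t = 1 \mid \mathbf S = \mathbf s).
\end{align*}
It therefore suffices to show $\Prob(M_t = 1 \mid \mathbf S = \mathbf s) \le \alpha$ for every $t \in A$ (with equality under exact coverage): averaging then gives $\E[\FCP(T) \mid \mathbf S = \mathbf s] \le \alpha$, and integrating over trajectories with $r \ge 1$ delivers $\pFCR(T) \le \alpha$.

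The main obstacle, and the only nontrivial step, is precisely this per-time bound conditional on the \emph{full} trajectory. Theorem~\ref{thm:main} controls miscoverage conditional on the \emph{past}, $(S_0,\dots,S_{t-1}, S_t = 1)$, so I must argue that additionally conditioning on the future selections $(S_{t+1},\dots,S_T)$ does not disturb coverage at time $t$. The key is that the permutations $\widetilde{\pi}$ used to establish exchangeability of the selected block $\{Z_i\}_{i \in \widetilde{\mathcal{D}}_t}$ in Lemma~\ref{lemma:calib02} act only on indices $\le t$ and leave the \emph{entire} selection trajectory invariant. Indeed, using \eqref{eq:rules-equal} one checks that the indicators $S_i^{\widetilde{\pi}}$ computed on the permuted sequence satisfy $S_i = S_i^{\widetilde{\pi}}$ for all $i \le t$; and since $\widetilde{\pi}$ fixes every future feature while the decision driven rules $\Sr_{t+1},\Sr_{t+2},\dots$ depend on the data only through these now-invariant past selections, an induction extends $S_i = S_i^{\widetilde{\pi}}$ to all $i \le T$. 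Hence the event $\{\mathbf S = \mathbf s\}$ is itself $\widetilde{\pi}$-invariant, so the symmetry argument behind \eqref{eq:symmetry} and Lemma~\ref{lemma:calib02} still applies after intersecting with $\{\mathbf S = \mathbf s\}$: the selected scores $\{R_i\}_{i \in \widetilde{\mathcal{D}}_t}$ remain exchangeable conditional on the full trajectory, and the classical rank argument gives $\Prob(M_t = 1 \mid \mathbf S = \mathbf s) \le \alpha$, with equality when the scores are almost surely distinct.

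Assembling the pieces, the averaging and integration of paragraph two yield $\pFCR(T) \le \alpha$, and under exact coverage every inequality becomes an equality, giving $\pFCR(T) = \alpha$. Since strategies \eqref{strategy:sfix}, \eqref{strategy:expres}, and \eqref{strategy:kexp} enjoy strong selection-conditional coverage by Theorem~\ref{thm:main}, the opening identity then gives $\FCR(T) \le \alpha$ in general and $\FCR(T) = \alpha$ when $\Sigma_T > 0$ almost surely and coverage is exact. The genuinely delicate point is the trajectory invariance of paragraph three; everything else is bookkeeping.
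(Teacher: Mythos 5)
Your proof is correct, and its skeleton coincides with the paper's: both reduce $\pFCR$ to a per-time miscoverage bound conditional on the \emph{full} selection trajectory (on which the denominator $1\vee\sum_j S_j$ is constant), and both transfer to $\FCR$ via the identity $\FCR(T)=\pFCR(T)\,\Prob\left(\sum_i S_i>0\right)$. The genuine difference is how the per-time bound is justified. The paper keeps the abstract hypothesis of strong selection-conditional coverage and disposes of the future selections with a conditional-independence assertion (its step $(a)$): given $(S_0,\dots,S_t)$, the decisions $(S_{t+1},\dots,S_T)$ are independent of $\{Y_t\notin\widehat{C}_t(X_t)\}$, so conditioning on the whole trajectory is the same as conditioning on the past, and Theorem~\ref{thm:main} finishes. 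You instead re-run the symmetry machinery: extending $\widetilde{\pi}$ by the identity on indices $t+1,\dots,T$, using \eqref{eq:rules-equal} and the decision-driven structure to show the entire trajectory $(S_0,\dots,S_T)$ is invariant under the permuted data, and concluding that the selected scores remain exchangeable conditional on $\{\mathbf{S}=\mathbf{s}\}$ --- essentially Lemma~\ref{lemma:strong-ex} upgraded from past-conditioning to full-trajectory conditioning. Each route buys something. Yours is more self-contained and arguably more careful: the paper's step $(a)$ tacitly needs future features to be (conditionally) independent of past data, which is automatic for i.i.d.\ data but not for merely exchangeable data, whereas your invariance argument uses exchangeability of $(Z_{-n},\dots,Z_T)$ alone. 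The price is generality: your key step leans on Lemma~\ref{lemma:calib02}, so strictly speaking you prove the conclusion for strategies \eqref{strategy:sfix}, \eqref{strategy:expres}, \eqref{strategy:kexp} rather than the literal first sentence of the proposition, namely an implication from strong selection-conditional coverage for an arbitrary procedure; since the ``Thus'' clauses are what the paper uses downstream, this mismatch is harmless but worth flagging. One small imprecision: your parenthetical ``equality when the scores are almost surely distinct'' overstates --- distinctness only pins coverage to within $1/|\widetilde{\mathcal{D}}_t|$ of $\alpha$, not exact equality; your final assembly, which instead assumes exact coverage as the proposition itself does, is the correct way to phrase it.
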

With similar arguments as for (strong) selection-conditional coverage, it follows that the \emph{merging} strategy also controls the FCR at a user-defined target level. 

\section{Other Conformal Methods}\label{sec:other}
Our discussion so far was only concerned about online selective conformal prediction algorithm in the sense of Section~\ref{sec:osci}. In principle, other conformal methods can also be applied to this problem. We first outline two of those: conformal LORD-CI \citep{weinstein2020online} and adaptive conformal inference (ACI) \citep{gibbs2021adaptive}, and discuss (potential) use-cases of such algorithms. We give more details about these methods in Appendix \ref{app:other}. 

We also note that a related discussion can be found in \citet{bao2024cas}. However, we want to highlight a crucial distinction: while \citet{bao2024cas} consider adaptive conformal inference algorithms to address scenarios where the exchangeability of the underlying data itself is violated, our perspective differs. We argue that when the selection mechanism disrupts exchangeability between the test datum and the calibration data, this too constitutes a form of distribution shift---induced not by the data-generating process itself, but by the selective nature of the inference procedure.

\subsection{Conformal LORD-CI} 
The LORD-CI algorithm was originally proposed by \citet{weinstein2020online}. Let $\alpha_t \in (0,1)$ be a $\mathcal{F}_{t-1}$-measurable coverage level, then a conformal prediction interval is constructed as
\begin{align}\label{eq:conformal-pi}
   \widehat{C}_t(X_t) = \{ y \in \Y:R(X_t,y) \leq \widehat{Q}_{1-\alpha_t}(\{ R_i \}_{i \in \Jtf}) \}.
\end{align}
The marginal level $\alpha_t$ is updated dynamically by maintaining the \emph{invariant}
\begin{align}\label{eq:invariant}
    \frac{\sum_{t=0}^T \alpha_t}{1 \vee \sum_{t=0}^{T} S_t} \leq \alpha,\quad \text{for any }T\geq 0.
\end{align}
Any online protocol maintaining the invariant \eqref{eq:invariant} is called (conformal) LORD-CI procedure.

\begin{proposition}\label{prop:lord-fcr}
Any conformal LORD-CI procedure with calibration indices $\mathcal{D}_t = \Jtf$ ensures that $\FCR(T) \leq \alpha$ for all $T \geq 0$.    
\end{proposition}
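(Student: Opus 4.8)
The plan is to show that the false coverage rate can be bounded by the expected sum of the marginal miscoverage levels $\alpha_t$, and then invoke the invariant~\eqref{eq:invariant} to conclude. The key structural fact I would exploit is that the calibration indices are fixed to the \emph{offline} data, $\mathcal{D}_t = \Jtf$, which by Assumption~\ref{assumption:a} is independent of \emph{all} the decision driven selection rules (and hence of every $S_t$). This independence is the workhorse: it decouples the coverage event at time $t$ from the selection indicators, so that the quantile $\widehat{Q}_{1-\alpha_t}(\{R_i\}_{i \in \Jtf})$ behaves like a standard split-conformal quantile computed on an exchangeable, selection-independent sample.

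First I would write the FCP explicitly and bound the numerator by dropping the denominator's coupling. The natural manipulation is
\begin{align*}
\FCR(T) = \E\biggl[\frac{\sum_{t=0}^T S_t \Ind{Y_t \notin \widehat{C}_t(X_t)}}{1 \vee \sum_{t=0}^T S_t}\biggr].
\end{align*}
The plan is to peel off the summands one at a time, conditioning on the filtration $\mathcal{F}_{t-1}$. Since $\alpha_t$ is $\mathcal{F}_{t-1}$-measurable and $S_t = \Sr_t(X_t)$ is also $\mathcal{F}_{t-1}$-measurable (the rule is decision driven), both $S_t$ and the threshold level are determined given $\mathcal{F}_{t-1}$. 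Conditionally on $\mathcal{F}_{t-1}$, the only remaining randomness relevant to the miscoverage event $\{Y_t \notin \widehat{C}_t(X_t)\}$ comes from the offline calibration scores together with the fresh test datum $(X_t,Y_t)$, which are exchangeable and independent of $\mathcal{F}_{t-1}$ by Assumption~\ref{assumption:a}. Hence the standard split-conformal bound gives
\begin{align*}
\Prob\bigl\{Y_t \notin \widehat{C}_t(X_t) \,\big|\, \mathcal{F}_{t-1}\bigr\} \leq \alpha_t,
\end{align*}
and the $S_t$ factor can be pulled outside this conditional probability because it is $\mathcal{F}_{t-1}$-measurable.

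The remaining step is to assemble these conditional bounds. The cleanest route is to replace the random denominator $1 \vee \sum_{t=0}^T S_t$ by something that cooperates with the telescoping. I would argue that it suffices to bound
\begin{align*}
\FCR(T) \leq \E\biggl[\frac{\sum_{t=0}^T S_t \alpha_t}{1 \vee \sum_{t=0}^T S_t}\biggr] \leq \E\biggl[\frac{\sum_{t=0}^T \alpha_t}{1 \vee \sum_{t=0}^T S_t}\biggr] \leq \alpha,
\end{align*}
where the first inequality comes from the conditional miscoverage bound applied termwise (using that $S_t\alpha_t$ is $\mathcal{F}_{t-1}$-measurable while the denominator must be handled carefully), the second uses $S_t \leq 1$ together with nonnegativity of $\alpha_t$, and the final inequality is exactly the invariant~\eqref{eq:invariant}, which holds pathwise for every $T$.

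The main obstacle will be the first inequality: the denominator $1 \vee \sum_{t=0}^T S_t$ depends on \emph{future} selection indicators $S_{t+1},\dots,S_T$ as well as on $S_t$ itself, so one cannot naively pass the conditional expectation of the indicator $\Ind{Y_t \notin \widehat{C}_t(X_t)}$ through the fraction. The resolution leverages Assumption~\ref{assumption:a} decisively: since the entire selection sequence $\{S_i\}_{i\geq 0}$ is a deterministic function of $\{X_i\}_{i \in \Jto}$ (via the decision driven rules) and is independent of the offline data $\{Z_i\}_{i \in \Jtf}$, the denominator is independent of the miscoverage event given $\mathcal{F}_{t-1}$ --- more precisely, the miscoverage event at time $t$ depends only on the offline scores and $(X_t,Y_t)$, none of which feed back into the selection indicators. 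I would make this rigorous either by a tower-property argument that conditions on the full selection vector $(S_0,\dots,S_T)$ first (freezing the denominator, which is then a constant) and then bounds each conditional miscoverage probability by $\alpha_t$, or equivalently by verifying that $\{Y_t \notin \widehat{C}_t(X_t)\}$ is conditionally independent of the denominator. Once the denominator is frozen by this outer conditioning, the termwise bound by $\alpha_t$ is immediate and the invariant closes the argument.
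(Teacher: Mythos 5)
Your overall scaffolding---bound $\FCR(T)$ by an expression involving the $\alpha_t$'s and then invoke the invariant \eqref{eq:invariant}---has the right shape, but the step you yourself flag as ``the main obstacle'' is where the argument genuinely breaks, and your proposed resolution does not repair it. The root error appears earlier, when you assert that $S_t = \Sr_t(X_t)$ is $\mathcal{F}_{t-1}$-measurable: Definition~\ref{def:dec-sel} makes only the \emph{rule} $\Sr_t$ predictable, while the \emph{indicator} $S_t$ depends on $X_t$ and is not $\mathcal{F}_{t-1}$-measurable (if it were, there would be no selection bias problem and the whole paper would be moot). Consequently the selection vector $(S_0,\dots,S_T)$ is a function of $(X_0,\dots,X_T)$ \emph{including} $X_t$, so your claim that the miscoverage event depends only on quantities that do not ``feed back into the selection indicators'' is false: $X_t$ feeds into $S_t$, and through the decision driven rules into $S_{t+1},\dots,S_T$ as well. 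Your tower-property fix therefore conditions on $S_t = 1$, i.e.\ on the selection event itself, and under that conditioning $R_t$ is no longer exchangeable with the unselected offline scores $\{R_i\}_{i\in\Jtf}$. Concretely, at $T=0$ your first inequality reads $\Prob(S_0=1,\,Y_0\notin\widehat{C}_0(X_0)) \leq \alpha_0\,\Prob(S_0=1)$, i.e.\ $\Prob(Y_0\notin\widehat{C}_0(X_0)\mid S_0=1)\leq\alpha_0$, which is exactly selection-conditional coverage for an interval calibrated on \emph{unselected} data---the failure mode the paper documents for the \texttt{FULL}-type strategies (Simulation~\ref{ill:rules}; e.g.\ a rule $\Ind{x>\tau}$ with the heteroskedastic noise model). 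So the inequality $\FCR(T)\leq\E\bigl[\sum_t S_t\alpha_t/(1\vee\sum_t S_t)\bigr]$ is not merely unjustified; it is false in general.

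The paper's proof (adapted from Theorem 5 of \citet{weinstein2020online}) is built precisely to avoid ever conditioning on $S_t$. It replaces the denominator by $\sum_{j=0}^{T} S_j^{(t)}$, where $\{S_j^{(t)}\}$ is the \emph{counterfactual} selection sequence obtained by forcing $S_t=1$ deterministically (legitimate because the $t$-th numerator carries the factor $S_t$), and then discards that factor via $S_t\leq 1$ \emph{before} any conditioning takes place. The counterfactual sequence is a function of $\{X_j\}_{j\neq t}$ alone---forcing $S_t=1$ severs the dependence of later indicators on $X_t$---so conditioning on $\mathcal{F}^{T\setminus t}$ fixes the denominator while leaving the \emph{marginal} split-conformal bound $\E[\Ind{Y_t\notin\widehat{C}_t(X_t)}\mid\mathcal{F}^{T\setminus t}]\leq\alpha_t$ intact; monotonicity of the selection rules (to pass from $\sum_j S_j^{(t)}$ back to $\sum_j S_j$) and the invariant \eqref{eq:invariant} then finish the proof. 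This is the idea your proposal is missing: one must pay the price of the crude bound $S_t\leq 1$, losing a factor $\Prob(S_t=1)$, in exchange for never conditioning the test point on its own selection, and the denominator must be decoupled from $S_t$ via the forced-selection sequence rather than by freezing the realized selections.
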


The constraint $\mathcal{D}_t = \Jtf$ is critical, and causes the method to be conservative in practice. We are presently unsure if the result can be expanded to allow utilizing $\Jto$.

\subsection{Adaptive Conformal Inference}
Adaptive conformal inference (ACI) \citep{gibbs2021adaptive} is a widely used conformal prediction algorithm, particularly suited for scenarios where exchangeability is violated, such as online settings with distribution shifts. The ACI algorithm adjusts the miscoverage level based on historical under- or over-coverage feedback. Specifically, for a target miscoverage level \(\alpha\), it updates \(\alpha_t\) according to
\begin{align*}
\alpha_t = \alpha_{t-1} + \gamma\left(\alpha  - \Ind{Y_{t-1} \notin \widehat{C}_t(X_{t-1}, \alpha_{t-1})}\right),
\end{align*}
where $\gamma > 0$ is a fixed step-size parameter. Here, $\widehat{C}_t(X_{t-1}, \alpha_{t-1})$ denotes the prediction interval constructed at time $t-1$ with nominal miscoverage level $\alpha_{t-1}$. This adaptive scheme \emph{increases} $\alpha_t$ whenever the previous prediction interval fails to cover $Y_{t-1}$ (indicating under-coverage) and \emph{decreases} $\alpha_t$ otherwise (indicating over-coverage), thereby steering the empirical miscoverage rate toward the specified target $\alpha$.

Recently, \citet{gibbs2024conformal} proposed an extension of ACI, termed DtACI, by employing an exponential re-weighting scheme to estimate the parameter $\gamma$. Another recent work in this regard is \citet{angelopoulos2024online}.

\begin{figure}[t!]
    \centering
    \includegraphics[width=\textwidth]{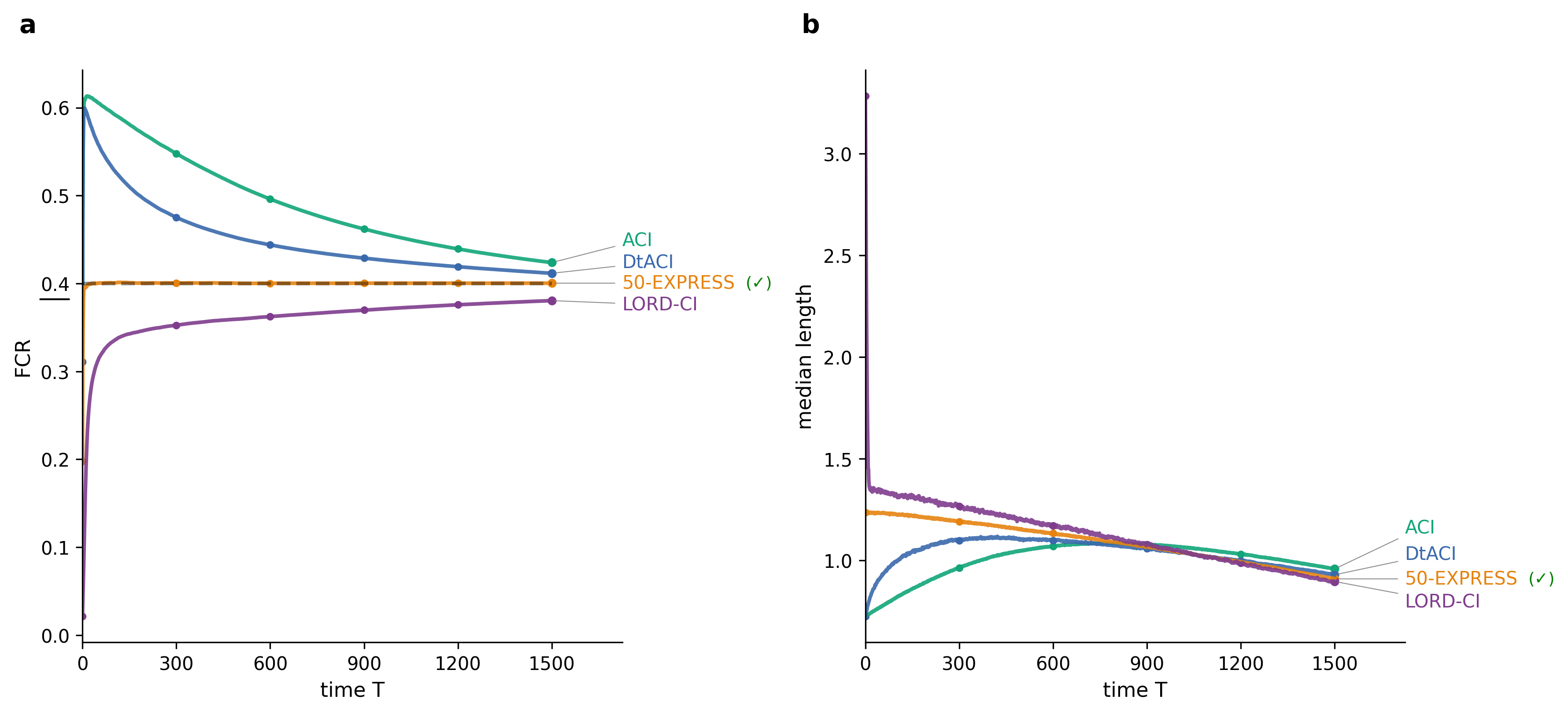}
    \caption{\textbf{(a)} FCR as a function of time $T$. The dashed black line represents the target level $\alpha = 0.4$. We highlight provably correct methods (\textcolor{pythonGreen}{\ding{52}}). \textbf{(b)}  Median prediction interval length over time. Shorter intervals indicate higher informativeness. \emph{All metrics are averaged over  $N = \num{1e4}$ runs.}}
    \label{fig:other}
\end{figure}

\begin{proposition}[adapted, \citet{gibbs2021adaptive}] \label{prop:aci} For all $T \geq 0$ we have that 
\begin{align}
   | \FCR(T) - \alpha| \leq  \frac{\max\{\alpha_1, 1 - \alpha_1\} + \gamma}{\sum_{i} S_i \gamma}
\end{align}
almost surely. Particularly, $\lim_{T \rightarrow \infty} \FCR(T) \overset{\rm{a.s.}}{=} \alpha$.
\end{proposition}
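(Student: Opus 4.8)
The plan is to adapt the standard ACI ``bounded trajectory plus telescoping'' argument of \citet{gibbs2021adaptive} to the selective setting, where the only structural change is that the miscoverage level $\alpha_t$ is updated \emph{solely} at selected times---the feedback $\Ind{Y_t \notin \widehat{C}_t(X_t,\alpha_t)}$ being available precisely when $S_t = 1$---which is exactly what makes the normalizer $\sum_i S_i$, rather than $T+1$, appear in the denominator. Throughout I would write $\mathrm{err}_t = \Ind{Y_t \notin \widehat{C}_t(X_t,\alpha_t)}$ for the realized miscoverage at a selected time, so the recursion reads $\alpha_{t+1} = \alpha_t + S_t\,\gamma(\alpha - \mathrm{err}_t)$, i.e.\ it updates when $S_t=1$ and carries $\alpha_t$ forward otherwise.

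First I would establish the deterministic invariant $\alpha_t \in (-\gamma,\, 1+\gamma)$ for all $t$, assuming $\alpha_1 \in [0,1]$. This is where the conformal construction enters: the interval $\widehat{C}_t(X_t,\alpha_t) = \{y : R(X_t,y) \le \widehat{Q}_{1-\alpha_t}(\cdots)\}$ is monotone in the level, and in particular equals all of $\Y$ when $\alpha_t \le 0$ (so $\mathrm{err}_t = 0$) and is empty when $\alpha_t \ge 1$ (so $\mathrm{err}_t = 1$). Hence whenever $\alpha_t$ threatens to leave $[0,1]$, the update is self-correcting: a value $\le 0$ can only be reached from a positive value through a single decrease of at most $\gamma(1-\alpha) < \gamma$, and symmetrically at the upper end, confining the trajectory to $(-\gamma, 1+\gamma)$. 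I would make this precise by a short induction. Crucially, this step is agnostic to \emph{which} calibration selection strategy is used, since it relies only on the extreme behavior of the interval in $\alpha_t$.

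Next I would telescope the recursion across times $t \le T$. Summing $\alpha_{t+1} - \alpha_t = S_t\,\gamma(\alpha - \mathrm{err}_t)$ yields
\[
\alpha_{T+1} - \alpha_1 = \gamma\Big(\alpha \sum_{t=0}^T S_t - \sum_{t=0}^T S_t\,\mathrm{err}_t\Big).
\]
On the event $\{\sum_t S_t \ge 1\}$, where $1 \vee \sum_t S_t = \sum_t S_t$, dividing by $\gamma\sum_t S_t$ and recognizing the second sum as the numerator of $\FCP(T)$ gives the exact identity $\FCP(T) - \alpha = -(\alpha_{T+1}-\alpha_1)/(\gamma\sum_t S_t)$. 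Combining with the invariant and the elementary bound $|\alpha_{T+1}-\alpha_1| \le \max\{\alpha_1, 1-\alpha_1\} + \gamma$ produces the stated pathwise bound, and for the limit I would let $T \to \infty$: under the implicit assumption that selections occur infinitely often, $\sum_t S_t \to \infty$ almost surely, so the right-hand side vanishes, $\FCP(T) \to \alpha$ a.s., and bounded convergence (using $\FCP(T) \in [0,1]$) transfers this to $\FCR(T) \to \alpha$.

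The main obstacle I anticipate is reconciling the ``almost sure'' phrasing with $\FCR$ being a deterministic expectation. The genuinely pathwise statement is about $\FCP$, whose normalizer $\sum_t S_t$ is random; passing to $\FCR = \E[\FCP]$ therefore either requires replacing $1/\sum_t S_t$ by $\E[1/\sum_t S_t]$ after taking expectations, or---more in the spirit of the proposition---treating the displayed inequality as the pathwise $\FCP$ bound and extracting the $\FCR$ conclusion only through the limit. The other place demanding care is tracking the boundedness constants so that the tight $\max\{\alpha_1, 1-\alpha_1\} + \gamma$ emerges rather than a looser bound.
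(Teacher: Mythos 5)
Your proposal is correct and takes essentially the same route as the paper: the paper's proof is a one-line deferral to Proposition 4.1 of \citet{gibbs2021adaptive}, whose argument is exactly the bounded-trajectory invariant ($\alpha_t \in [-\gamma, 1+\gamma]$, via the self-correcting behavior of the conformal interval at extreme levels) plus telescoping that you reconstruct, with the number of updates $\sum_i S_i$ replacing $T$ in the denominator. Your reading of the ``almost surely'' phrasing---that the displayed inequality is genuinely a pathwise bound on $\FCP(T)$, with the $\FCR$ limit recovered via $\sum_t S_t \to \infty$ and bounded convergence---is also the correct interpretation of the proposition's statement.
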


The proof of Proposition \ref{prop:aci} follows directly from  \citet[Proposition 4.1]{gibbs2021adaptive}. We also note that \citet{bao2024cas} established a similar result for DtACI adapted to the selective setting.

\begin{illustration} The setup is the same as in Simulation \ref{sim:fcr}, with the only difference that we choose here  $N_{\rm{off}} = 200$ and  $N_{\rm{on}} = 1500$.  For $t \geq 0$, the selection rule is again given by \eqref{def:rule-b} where we choose here $\tau_0 = 1500$ and $\tau_1 = 1$.
\end{illustration}

\emph{Results.} Figure~\ref{fig:other} compares other conformal methods with online selective conformal prediction using the \texttt{50-EXPRESS} strategy in terms of FCR control and median prediction interval length over time.

Subplot \textbf{(a)} illustrates the FCR as time progresses. ACI and DtACI gradually approach the nominal target as time increases---albeit only asymptotically---while LORD-CI strictly controls FCR for all $T \geq 0$, but only in a notably conservative manner.

Subplot \textbf{(b)} shows the median length of prediction intervals over time. Here, LORD-CI starts off with the largest intervals, mirroring its conservative nature in FCR control. Over time, however, all methods converge toward a similar interval size, suggesting that while they differ in short-term behavior, their long-term median interval lengths stabilize to comparable levels.

While LORD-CI offers finite-sample FCR control, it does so at the cost of conservativeness. In contrast, adaptive approaches such as ACI and DtACI produce shorter intervals but only guarantee FCR control asymptotically. 

As already noted by \citet{bao2024cas}, ACI is particularly useful when the data is not exchangeable. While this often holds in online settings, this comes at the cost of forgoing selection-conditional coverage guarantees and finite FCR control (since only asymptotic FCR control would be guaranteed) when the data is exchangeable. On the positive side, ACI can be applied directly without modification, making it a convenient off-the-shelf solution. However, this convenience comes at the expense of leveraging the exchangeability of the data (when this is the case), which could otherwise be exploited for stronger theoretical guarantees.

% Related work
\section{Other Related Work}
As previously noted, our paper is closely related to \citet{bao2024cas} and \citet{weinstein2020online}. To the best of our knowledge, the literature on the online selective conformal inference is limited to these. We provide a brief discussion of other work, which are only related to our paper in a broader sense. 

Traditional selective inference provides valid statistical inference after data-driven selection (e.g. model selection). These methods condition on the selection event to correct for “cherry-picking” and avoid overstated significance. A prominent example is the exact post-selection inference framework for the Lasso by \citet{lee2016exact}. Such classical approaches, including \cite{fithian2014optimal}, guarantee nominal Type I error given the selection, typically under parametric assumptions (e.g. Gaussian noise). However, they often require specialized derivations for each procedure and may be limited in scope (e.g. specific to linear models). Recently, this motivated \emph{distribution-free} selective inference, where conformal prediction  plays a key role. However, most of these works (discussed below) are in the offline setting.

A central idea is to create selection-aware conformal p-values or intervals that account for the fact that one only reports results on selected instances or hypotheses. Several recent works implement this idea. \citet{jin2023selection} introduced conformal p-values for identifying instances with large outcomes while controlling false discoveries. The method wraps around any predictive model to output p-values for each test instance that its outcome exceeds a specified threshold. By applying a Benjamini–Hochberg (BH) \citep{benjamini1995controlling} procedure to these p-values, one can select a subset of candidates and control the FDR in finite samples. 

\citet{bao2024selective} construct prediction intervals for selected individuals while controlling the FCR. They develop a split-conformal procedure called SCOP (Selective COnditional conformal Predictions) that uses a held-out calibration set to mimic the selection on the test set. By performing the same selection on calibration data and constructing conformal intervals on those subsets, SCOP achieves exact FCR control under exchangeability. Notably, it improves interval efficiency: whereas a naive FCR adjustment \citep{benjamini2005false} would inflate all intervals uniformly, SCOP yields narrower intervals in practice while still guaranteeing that a chosen proportion  of the reported intervals covers the truth. 

\citet{wang2025conformalized}  tackle selection effects in multiple hypothesis testing powered by conformal prediction. They introduce a selective conformal p-value that adjusts for a broad class of selection procedures (for instance, selecting top-scoring instances for testing). The key idea is to use a holdout set to simulate the selective distribution --- effectively, recalibrating conformal p-values on data that underwent a similar selection rule. By adaptively choosing the calibration data based on the stability of the selection rule, they ensure the calibration set is exchangeable with the selected test point. This yields valid post-selection p-values which, when plugged into BH, control the FDR on the selected subset. 

Another recent work, \citet{bai2024optimized}, addresses a practical challenge: how to perform model selection or tuning within a conformal inference procedure without invalidating the guarantees. Typically, if one uses the same data to choose a predictive model (or conformity score) and to calibrate prediction sets, the exchangeability needed for conformal validity can be broken. Existing solutions often demand data splitting or a fixed model choice independent of calibration data. They provide a general framework for valid post-selection inference even when models or scores are optimized on the data.

As mentioned earlier, all these above works are in the offline setting, and thus extending the above papers to the online setting seems a ripe direction for future work.

% Discussion
\section{Discussion}
Our findings highlight that the choice of calibration strategy is crucial in ensuring key inferential guarantees --- namely, selection-conditional coverage and FCR control. 
The theoretical results presented in this paper reveal that  existing calibration selection strategies, such as the recently proposed adaptive strategy (\texttt{ADA}), do not necessarily preserve the exchangeability between the calibration set and the selected test datum. As a result, they fail to provide valid selection-conditional coverage guarantees in general. In contrast, the proposed family of exchangeability-preserving calibration selection strategies provides selection-conditional coverage. 

There are several avenues for future research. First, there appears to be a trade-off between strict theoretical guarantees (via exchangeability preservation) and practical efficiency (in terms of calibration set size and interval width). Investigating adaptive or hybrid selection strategies that balance these objectives (like \texttt{K-EXPRESS} or \texttt{EXPRESS-M}) is a promising direction. Second, while our analysis assumes decision driven selection rules, extending the framework to other (potentially more general) selection rules remains an important challenge. Third, empirical studies across diverse applications---such as medical decision-making and finance---could provide insights into the real-world applicability of these methods.

\subsection*{Acknowledgments}
The authors thank Margaux Zaffran for helpful comments.  
Yusuf Sale is supported by the DAAD program Konrad Zuse Schools of Excellence in Artificial Intelligence, sponsored by the Federal Ministry of Education and Research.

\newpage 
\bibliography{references.bib}

\newpage
\appendix

\section{Preliminaries}\label{app:prelim}
For completeness, we provide basic results on quantile and related concepts, which are already well-documented in the conformal prediction literature \citep{barber2023conformal, lei2018distribution, vovk2005algorithmic, romano2019conformalized}. For a comprehensive overview of the theoretical foundations of conformal prediction, see \citet{angelopoulos2024theoretical}.

Let $Z$ be a random variable with cumulative distribution 
function (cdf) $F_Z$. The \emph{quantile function} $Q(\alpha)$ is defined as the (generalized) inverse of $F_Z$, for $\alpha \in (0,1)$. Formally,
\begin{align*}
Q(\alpha) 
&= F_Z^{-1}(\alpha) \\[0.2cm]
&= \inf \{ z \in \mathbb{R} : F_Z(z) \geq \alpha \}.
\end{align*}
Equivalently, $Q(\alpha)$ is the smallest real number $z$ such that 
$F_Z(z) \ge \alpha$. Now suppose we have a sample $Z_1, \ldots, Z_n$ drawn from the distribution of $Z$. The \emph{empirical} cdf $\hat{F}_n$ is defined by
\begin{align*}
\widehat{F}_n(z) 
= \frac{1}{n} \sum_{i=1}^n \Ind{Z_i \le z},
\end{align*}
where $\Ind{\cdot}$ denotes the indicator function. Then the \emph{empirical} quantile function $\widehat{Q}_n(\alpha)$ is given as the inverse of $\widehat{F}_n$:
\begin{align*}
\widehat{Q}_n(\alpha) 
&= \inf \{ z \in \mathbb{R} : \widehat{F}_n(z) \geq \alpha \}.
\end{align*}
More explicitly, if we denote the order statistics by 
$ Z_{(1)} \leq Z_{(2)} \leq \cdots \le Z_{(n)},$ then 
\begin{align*}
\widehat{Q}_n(\alpha) 
= Z_{\bigl(\lceil n \alpha \rceil \bigr)},
\end{align*}
where $\lceil n \alpha \rceil$ is the ceiling of $n \alpha$.

\begin{lemma}[Quantile lemma]
Let $Z_1,\dots,Z_n$ be exchangeable random variables. Then, for any $\alpha \in (0,1)$ 
\begin{align*}
    \mathbb{P}(Z_n \leq \widehat{Q}_n(\alpha)) \geq \alpha.
\end{align*}
Additionally, if the random variables $Z_1,\dots,Z_n$ are almost surely distinct, then
\begin{align*}
   \mathbb{P}(Z_n \leq \widehat{Q}_n(\alpha)) \leq \alpha + \frac{1}{n}.
\end{align*}
\end{lemma}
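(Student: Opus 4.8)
The plan is to reduce the event $\{Z_n \leq \widehat{Q}_n(\alpha)\}$ to a statement about the rank of $Z_n$ among $Z_1,\dots,Z_n$, and then exploit exchangeability to control that rank. Writing $k = \lceil n\alpha\rceil$ so that $\widehat{Q}_n(\alpha) = Z_{(k)}$, I first record the elementary order-statistic identity
\[
 Z_n \leq Z_{(k)} \iff R_n^- \leq k-1, \qquad R_n^- := |\{\, i : Z_i < Z_n \,\}|,
\]
which holds because the values strictly below $Z_n$ occupy exactly the sorted positions $1,\dots,R_n^-$, so the value $Z_n$ first appears at position $R_n^- + 1$. Everything therefore comes down to bounding $\mathbb{P}(R_n^- \leq k-1)$.

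For the lower bound, the key deterministic fact is that for \emph{every} realization,
\[
 \sum_{j=1}^n \Ind{R_j^- \leq k-1} \geq k, \qquad R_j^- := |\{\, i : Z_i < Z_j \,\}|,
\]
since each of the $k$ smallest order statistics $Z_{(1)},\dots,Z_{(k)}$ corresponds to a distinct index $j$ having at most $k-1$ points strictly below it. By exchangeability the expectations $\mathbb{E}[\Ind{R_j^- \leq k-1}]$ coincide across $j$ (swapping index $j$ with $n$ leaves the joint law unchanged and carries $R_j^-$ to $R_n^-$), so averaging the displayed inequality gives $\mathbb{P}(R_n^- \leq k-1) = \tfrac1n \mathbb{E}\big[\sum_{j} \Ind{R_j^- \leq k-1}\big] \geq k/n$. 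Because $k = \lceil n\alpha\rceil \geq n\alpha$, this yields $\mathbb{P}(Z_n \leq \widehat{Q}_n(\alpha)) \geq k/n \geq \alpha$.

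For the upper bound under almost-sure distinctness, the ties vanish, so $R_1^-,\dots,R_n^-$ is a permutation of $\{0,1,\dots,n-1\}$; the counting inequality above then collapses to the exact identity $\sum_j \Ind{R_j^- \leq k-1} = k$, whence $\mathbb{P}(R_n^- \leq k-1) = k/n$ exactly. Combining this with $k = \lceil n\alpha\rceil \leq n\alpha + 1$ gives $\mathbb{P}(Z_n \leq \widehat{Q}_n(\alpha)) = k/n \leq \alpha + 1/n$, as claimed.

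The main obstacle is handling ties correctly: the clean ``uniform rank'' intuition holds verbatim only in the distinct case, so for the general lower bound I must replace it by the one-sided counting inequality $\sum_j \Ind{R_j^- \leq k-1} \geq k$, formulated in terms of the strict ranks $R_j^-$ rather than closed ranks. Verifying this counting bound, together with its collapse to equality when the variables are distinct, is the only nonroutine step; the remainder is exchangeability-driven averaging and the elementary arithmetic of the ceiling function.
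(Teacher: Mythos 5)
Your proof is correct. Note that the paper itself gives no proof of this lemma: it appears in the preliminaries appendix as a standard, well-documented fact with citations to the conformal prediction literature, so there is no in-paper argument to compare against. Your argument is the standard one for this result, and it is carried out carefully: the reduction $Z_n \leq Z_{(k)} \iff |\{i : Z_i < Z_n\}| \leq k-1$ with $k = \lceil n\alpha \rceil$ is valid even with ties, the deterministic counting bound $\sum_{j=1}^n \mathbbm{1}\{R_j^- \leq k-1\} \geq k$ holds because the indices realizing $Z_{(1)},\dots,Z_{(k)}$ each have at most $k-1$ points strictly below them, and the exchangeability step (transposing $j$ with $n$ preserves the joint law and maps $R_j^-$ to $R_n^-$) correctly equalizes the marginal probabilities, giving $\mathbb{P}(R_n^- \leq k-1) \geq k/n \geq \alpha$. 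The upper bound is also right: under almost-sure distinctness the strict ranks form a permutation of $\{0,\dots,n-1\}$, the counting inequality becomes the identity $\sum_j \mathbbm{1}\{R_j^- \leq k-1\} = k$, and $k/n \leq \alpha + 1/n$ follows from $\lceil n\alpha\rceil \leq n\alpha + 1$. Your closing remark identifies the genuinely delicate point—ties—and your use of strict ranks handles it; this is precisely where sloppier versions of the argument (assuming uniform ranks without distinctness) would fail for the lower bound.
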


\begin{lemma}[Quantile inflation]
Let $Z_1,\dots,Z_{n+1}$ be exchangeable random variables. Then, for any $\alpha \in (0,1)$ 
\begin{align*}
    \mathbb{P}(Z_n \leq \widehat{Q}_n((1 + 1/n)\alpha)) \geq \alpha.
\end{align*}
Additionally, if the random variables $Z_1,\dots,Z_{n+1}$ are almost surely distinct, then
\begin{align*}
   \mathbb{P}(Z_n \leq \widehat{Q}_n((1 + 1/n)\alpha)) \leq \alpha + \frac{1}{n+1}.
\end{align*}
\end{lemma}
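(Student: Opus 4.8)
The plan is to reduce the claim to the preceding Quantile lemma applied to the full exchangeable collection $Z_1,\dots,Z_{n+1}$, by recognizing that comparing the held-out point against the $n$-point empirical quantile \emph{at the inflated level} $(1+1/n)\alpha$ is, as an event, the same as comparing it against the $(n+1)$-point empirical quantile at the plain level $\alpha$. Throughout, $\widehat{Q}_n$ denotes the empirical quantile of the calibration sample $Z_1,\dots,Z_n$ and the target is the held-out observation $Z_{n+1}$. The first step is the arithmetic identity $\lceil n(1+1/n)\alpha\rceil = \lceil (n+1)\alpha\rceil =: m$, so that, using $\widehat{Q}_n(\beta)=Z_{(\lceil n\beta\rceil)}$, we get $\widehat{Q}_n\big((1+1/n)\alpha\big) = Z_{(m)}$, the $m$-th order statistic of the $n$ calibration points (with the convention that this equals $+\infty$ when $m>n$).

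The crux is a purely deterministic order-statistic identity, requiring no exchangeability: for every real value $z$,
$$ z \le \widehat{Q}_n\big((1+1/n)\alpha\big) \iff z \le \widehat{Q}_{n+1}(\alpha),$$
where on the right $\widehat{Q}_{n+1}(\alpha)$ denotes the empirical quantile at level $\alpha$ of the augmented sample $Z_1,\dots,Z_n,z$. I would prove this by counting: $z \le \widehat{Q}_{n+1}(\alpha)$ holds exactly when $z$ lies among the $m$ smallest of the $n+1$ augmented values, i.e.\ when at most $m-1$ of the calibration points fall strictly below $z$, which is precisely the condition $z \le Z_{(m)} = \widehat{Q}_n((1+1/n)\alpha)$. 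The $+\infty$ convention makes both sides automatically true when $m>n$, so the edge regime in which $(1+1/n)\alpha$ exceeds $1$ is absorbed without separate treatment. Instantiating this identity at $z = Z_{n+1}$ rewrites the inflated-quantile event as $\{Z_{n+1} \le \widehat{Q}_{n+1}(\alpha)\}$, where now $\widehat{Q}_{n+1}(\alpha)$ is the ordinary empirical quantile of all $n+1$ points.

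With the event in this form, both bounds are immediate from the Quantile lemma applied at sample size $n+1$ to the exchangeable tuple $(Z_1,\dots,Z_{n+1})$ with target $Z_{n+1}$ (any relabeling of indices is harmless by exchangeability). The lemma yields $\mathbb{P}(Z_{n+1} \le \widehat{Q}_{n+1}(\alpha)) \ge \alpha$, which transports back to $\mathbb{P}(Z_{n+1} \le \widehat{Q}_n((1+1/n)\alpha)) \ge \alpha$; and, under the almost-sure distinctness hypothesis, it yields the matching $\mathbb{P}(Z_{n+1} \le \widehat{Q}_{n+1}(\alpha)) \le \alpha + 1/(n+1)$. These are exactly the two claimed inequalities, the denominator $n+1$ arising automatically because the governing quantile lemma is applied to $n+1$ observations.

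I expect the main obstacle to be the deterministic identity in the second paragraph: getting the tie-handling and the ceiling arithmetic exactly right so that the inflated $n$-point quantile and the plain $(n+1)$-point quantile genuinely agree on the comparison with $Z_{n+1}$, and verifying that the $+\infty$ convention cleanly covers the regime $(1+1/n)\alpha \ge 1$. Once that identity is in place the probabilistic content is supplied entirely by the already-established Quantile lemma, so no fresh exchangeability argument is required.
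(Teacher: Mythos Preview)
The paper does not actually prove this lemma---it is listed among the preliminaries as a standard result, with a pointer to the conformal literature---so there is no in-paper proof to compare against. Your approach is the standard one and is correct: the arithmetic identity $\lceil n(1+1/n)\alpha\rceil = \lceil(n+1)\alpha\rceil$ together with the order-statistic counting argument (``at most $m-1$ calibration points strictly below $z$'') shows that the events $\{Z_{n+1}\le \widehat{Q}_n((1+1/n)\alpha)\}$ and $\{Z_{n+1}\le \widehat{Q}_{n+1}(\alpha)\}$ coincide, including under ties and in the boundary regime $m=n+1$ via the $+\infty$ convention; the preceding Quantile lemma applied at sample size $n+1$ then delivers both inequalities with the correct $1/(n+1)$ slack. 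One remark: the paper's statement writes $Z_n$ where $Z_{n+1}$ is evidently intended (otherwise the exchangeability hypothesis on $Z_{n+1}$ is idle), and you have silently and correctly read it that way.
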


\newpage 
\section{Omitted Proofs}\label{app:proofs}
The proof of Theorem \ref{thm:main} follows from Lemma \ref{lemma:calib02} and classical conformal arguments (see in particular \citet{angelopoulos2024theoretical} and references therein).

\begin{lemma}\label{lemma:strong-ex}
Let the selection procedure be instantiated with strategy \eqref{strategy:sfix}, \eqref{strategy:expres} or \eqref{strategy:kexp}. Then,  $\{Z_i\}_{i \in \widetilde{\mathcal{D}}_t}$ are exchangeable conditional on the event that $S_0 = s_0, \dots, S_{t-1} = s_{t-1}$, where $(s_0,\dots,s_{t-1}) \in \{0,1\}^{t-1}$.
\end{lemma}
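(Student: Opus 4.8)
The plan is to upgrade the proof of Lemma~\ref{lemma:calib02} by carrying the selection-history event $E := \{S_0 = s_0,\dots,S_{t-1} = s_{t-1}\}$ through the same symmetrization argument. Fix a realizable history $(s_0,\dots,s_{t-1})$ and an index set $D$ with $\Prob(\widetilde{\mathcal{D}}_t = D,\, E) > 0$, a permutation $\pi$ of $D$, and its extension $\widetilde{\pi}$ as in~\eqref{def:perm}. Exactly as in the template preceding~\eqref{eq:symmetry}, the conditional-exchangeability identity~\eqref{def:cond-exchangeable}---now with $E$ adjoined to the conditioning event---reduces, after relabeling the exchangeable integration variables $(Z_{-n},\dots,Z_t) \mapsto (Z_{\widetilde{\pi}(-n)},\dots,Z_{\widetilde{\pi}(t)})$, to showing that the joint event $\{\widetilde{\mathcal{I}}_t = D\} \cap E$ is invariant under this substitution. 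I would therefore establish two invariances on the event $\{\widetilde{\mathcal{I}}_t(Z_{-n},\dots,Z_t) = D\}$: (i) symmetry of $\widetilde{\mathcal{I}}_t$, i.e.\ \eqref{eq:symmetry}; and (ii) symmetry of the selection history $E$.

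Invariance (i) is precisely what Lemma~\ref{lemma:calib02} already supplies for each of \eqref{strategy:sfix}, \eqref{strategy:expres}, \eqref{strategy:kexp}, so the genuinely new content is (ii). For the exchangeability-preserving strategies I would invoke \eqref{eq:rules-equal}: on $\{\widetilde{\mathcal{I}}_t = D\}$ every pair $j,k \in \widetilde{\mathcal{D}}_t$ satisfies $\mathcal{S}_i(X_j) = \mathcal{S}_i(X_k) = \mathcal{S}_i(X_t)$ for all $i \geq 0$, so the hypothesis of Lemma~\ref{lemma:selrule} holds and $\mathcal{S}^{\widetilde{\pi}}_i \equiv \mathcal{S}_i$. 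The permuted decision at each time $i \leq t-1$ is then $\mathcal{S}^{\widetilde{\pi}}_i(X_{\widetilde{\pi}(i)}) = \mathcal{S}_i(X_{\widetilde{\pi}(i)})$; for $i \in \widetilde{\mathcal{D}}_t$ this equals $\mathcal{S}_i(X_t) = \mathcal{S}_i(X_i)$ by~\eqref{eq:rules-equal}, while for $i \notin \widetilde{\mathcal{D}}_t$ we have $\widetilde{\pi}(i) = i$, so in either case $S_i$ is unchanged and $E$ is preserved. For strategy~\eqref{strategy:sfix} the argument is shorter still: since $\widetilde{\pi}$ only moves offline indices and the single index $t$, and since $S_0,\dots,S_{t-1}$ are determined by $X_0,\dots,X_{t-1}$ and are independent of the offline data by Assumption~\ref{assumption:a}, relabeling never touches any earlier decision and $E$ is trivially invariant.

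With (i) and (ii) in hand, both $\Ind{\widetilde{\mathcal{I}}_t = D}$ and $\Ind{E}$ are left unchanged by $Z_{\bullet} \mapsto Z_{\widetilde{\pi}(\bullet)}$ on the relevant event, so the chain of equalities establishing~\eqref{def:cond-exchangeable} carries over verbatim with $E$ appended to the conditioning, giving conditional exchangeability of $\{Z_i\}_{i \in \widetilde{\mathcal{D}}_t}$ given $E$. I expect step (ii) to be the main obstacle, as it is where the precise structure of the strategies---not merely symmetry of the index map---enters: one must verify that relabeling calibration points never perturbs an earlier selection decision, which is exactly why the strong form of~\eqref{eq:rules-equal} (equality of the selection \emph{outcomes} under every rule, not just of the resulting index pattern) is indispensable.
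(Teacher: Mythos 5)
Your proposal is correct and follows essentially the same route as the paper's proof: the paper likewise encodes $\{(Z_i)_{i\in D}\in A\}\cap\{\widetilde{\mathcal{I}}_t=D\}\cap E$ as a single set $B$ of data sequences, applies exchangeability of $(Z_{-n},\dots,Z_t)$ to relabel, and then undoes the relabeling inside the conditioning events by invoking permutation invariance of both $\widetilde{\mathcal{I}}_t$ and the past decisions $S_j$, $0\leq j\leq t-1$. Your step (ii) simply makes explicit what the paper compresses into the phrase ``by Lemma~\ref{lemma:calib02}, both $\widetilde{\mathcal{I}}_t(\cdot)$ and $\mathcal{S}_j(\cdot)$ are permutation invariant'' --- namely that the invariance of the selection history is supplied by \eqref{eq:rules-equal} together with Lemma~\ref{lemma:selrule} (with the obvious restriction of the matched rounds to $\{t-k,\dots,t-1\}$ for \eqref{strategy:kexp}).
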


\begin{proof}[Proof]
Let $(s_0,\dots,s_{t-1})\in\{0,1\}^{t-1}$ and $D$ any set of indices. Further, denote by $\mathcal{E}$ the event that $ \widetilde{\mathcal{I}}_t(Z_{-n},\dots,Z_t) =D$ and $S_0=s_0,\dots,S_{t-1}=s_{t-1}$, and assume $\mathbb{P}(\mathcal{E}) > 0$. For any measurable $A \subseteq (\mathcal{X}\times\mathcal{Y})^{|D|}$, let 
$
  B = \Bigl\{
    (z_{-n},\dots,z_t)\in (\mathcal{X}\times\mathcal{Y})^{N+1}: 
    (z_i)_{i\in D}\in A,\;
    \widetilde{\mathcal{I}}_t(z_{-n},\dots,z_t) = D,\;
    \mathcal{S}_j(x_j)=s_j,\; 0 \leq j\leq t-1
  \Bigr\}.
$
Thus, $\{(Z_{-n}, \dots, Z_t)\in B\} = \{(Z_i)_{i\in D}\in A,\;\mathcal{E}\}$. By Lemma \ref{lemma:calib02}, both $\widetilde{\mathcal{I}}(\cdot)$ and $\Sr_j(\cdot)$ are permutation invariant, for $0 \leq j \leq t -1$. This yields $\{(Z_{-n},\dots,Z_t)\in B\} = \{(Z_{  \widetilde{\pi}(-n)},\dots,Z_{  \widetilde{\pi}(t)}) \in B\}$ and with that
\begin{align*}
    \mathbb{P} \left((Z_i)_{i \in D}\in A,\; \mathcal{E}
  \right) &= \mathbb{P}\left((Z_{-n},\dots,Z_t)\in B\right) \\[0.2cm]
  &= \mathbb{P}\left((Z_{  \widetilde{\pi}(-n)},\dots,Z_{\widetilde{\pi}(t)})\in B\right) \\[0.2cm]
  &= \mathbb{P}\left((Z_{\pi(i)})_{i\in D}\in A,\; \mathcal{E}\right),
\end{align*}
where the second equality follows from exchangeability of $\{Z_i\}_{i=1}^{t}$.
\end{proof}

\begin{lemma}\label{lemma:p-valid-strong}
Let the selection procedure be instantiated with strategy \eqref{strategy:sfix}, \eqref{strategy:expres} or \eqref{strategy:kexp}, and $(s_0,\dots,s_{t-1}) \in \{0,1\}^{t-1}$. If we have $\, \Prob(S_0 = s_0, \dots, S_{t-1} = s_{t-1}, S_t = 1) > 0$, then 
\begin{align}
    p_{\widetilde{\mathcal{D}}_t} = \frac{\sum_{j \in \widetilde{\mathcal{D}}_t} \Ind{R_j \geq R_{t}}}{| \widetilde{\mathcal{D}}_t |}
\end{align}
is a valid p-value, i.e., $\Prob(p_{\widetilde{\mathcal{D}}_t} \leq \alpha \given S_0 = s_0,\dots,S_{t-1} = s_{t-1}, S_t = 1) \leq \alpha $ for any $\alpha \in [0,1]$.
\end{lemma}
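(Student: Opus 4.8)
The plan is to reduce the claim to the classical super-uniformity of rank-based conformal p-values, feeding in the conditional exchangeability already supplied by Lemma~\ref{lemma:strong-ex}. The crucial preliminary observation is that, by \eqref{eq:t-in-I-tilde-t}, the event $\{S_t = 1\}$ is exactly the event $\{t \in \widetilde{\mathcal{D}}_t\}$. Writing $\mathcal{E}' = \{S_0 = s_0, \dots, S_{t-1} = s_{t-1}\}$, I would first decompose the conditioning event according to the realized calibration index set,
\begin{align*}
\{\mathcal{E}', \, S_t = 1\} \,=\, \bigsqcup_{D \,:\, t \in D} \{\widetilde{\mathcal{D}}_t = D, \, \mathcal{E}'\},
\end{align*}
the (disjoint) union ranging over index sets $D$ with $t \in D$ and $\Prob(\widetilde{\mathcal{D}}_t = D, \mathcal{E}') > 0$. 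By the tower property it then suffices to establish the bound conditional on each such event $\{\widetilde{\mathcal{D}}_t = D, \mathcal{E}'\}$ separately and average, since a convex combination of numbers each $\le \alpha$ is again $\le \alpha$.

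Next, fix such a $D$ with $t \in D$ and set $m = |D|$. On this event the p-value $p_{\widetilde{\mathcal{D}}_t} = m^{-1}\sum_{j \in D}\Ind{R_j \geq R_t}$ is a fixed, permutation-symmetric function of the scores $\{R_j\}_{j \in D}$, each of which is a deterministic function of $Z_j$. By Lemma~\ref{lemma:strong-ex}, conditional on $\{\widetilde{\mathcal{D}}_t = D, \mathcal{E}'\}$ the variables $\{Z_j\}_{j \in D}$ are exchangeable; hence so are the scores $\{R_j\}_{j \in D}$, with the test score $R_t$ being genuinely one of these $m$ exchangeable scores rather than a distinguished element.

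It then remains to invoke the standard rank argument. For exchangeable scores, the ties-inclusive rank $\sum_{j \in D}\Ind{R_j \geq R_t}$ is, in the almost surely distinct case, uniformly distributed on $\{1,\dots,m\}$ by exchangeability, so that after normalization $\Prob(p_{\widetilde{\mathcal{D}}_t} \leq \alpha \mid \widetilde{\mathcal{D}}_t = D, \mathcal{E}') = \lfloor m\alpha\rfloor/m \leq \alpha$ for every $\alpha \in [0,1]$; allowing ties only enlarges the count and hence makes the p-value more conservative, preserving the inequality. Averaging this bound over $D$ as in the first step yields the claim.

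The main obstacle is purely the bookkeeping of the conditioning, not any hard estimate: one must recognize that conditioning on $S_t = 1$ is precisely conditioning on $t \in \widetilde{\mathcal{D}}_t$, and that Lemma~\ref{lemma:strong-ex} only delivers exchangeability of the scores \emph{including} the test index $t$ after the further conditioning on the realized set $\widetilde{\mathcal{D}}_t = D$. Once the conditioning is arranged so that $t$ is truly exchangeable with the remaining selected indices, the super-uniformity of the conformal rank is routine and completes the argument.
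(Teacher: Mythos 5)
Your proof is correct and follows essentially the same route as the paper's: both hinge on identifying $\{S_t = 1\}$ with $\{t \in \widetilde{\mathcal{D}}_t\}$ via \eqref{eq:t-in-I-tilde-t} and then applying the standard conformal rank argument to the scores $\{R_j\}_{j \in \widetilde{\mathcal{D}}_t}$, which are exchangeable under the relevant conditioning by Lemma~\ref{lemma:strong-ex}. The only difference is explicitness: you spell out the decomposition over realized calibration sets $D$, the uniformity of the rank, and the tie-handling, all of which the paper's terse proof leaves implicit.
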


\begin{proof}
    Let $\mathcal E_t$ denote the event that $t \in \widetilde{\mathcal{D}}_t \equiv \widetilde{\mathcal{I}}_t(Z_{-n},\dots,Z_t)$, which by~\eqref{eq:t-in-I-tilde-t} is equal to the event that $S_t = 1$, and $S_0 = s_0, \dots, S_{t-1} = s_{t-1}$, for $(s_0,\dots,s_{t-1}) \in \{0,1\}^{t-1}$. Conditional on $\mathcal E_t$, the non-conformity scores $\{R_i\}_{i \in \widetilde{\mathcal{D}}_t}$ are exchangeable. This implies that 
\begin{align}
    p_{\widetilde{\mathcal{D}}_t} = \frac{\sum_{j \in \widetilde{\mathcal{D}}_t} \Ind{R_j \geq R_{t}}}{| \widetilde{\mathcal{D}}_t |}
\end{align}
is a valid p-value, i.e., $\Prob\left(p_{\widetilde{\mathcal{D}}_t} \leq \alpha \given \mathcal{E}_{t}\right) \leq \alpha $, implying that $\Prob\left(p_{\widetilde{\mathcal{D}}_t} \leq \alpha \given S_0 = s_0, \dots, S_{t-1} = s_{t-1}, S_t=1\right) \leq \alpha$.
\end{proof}

\begin{proof}[Proof of Theorem \ref{thm:main}]
Since we have $Y_t \in \widehat{C}_t(X_t) \iff p_{\widetilde{\mathcal{D}}_t} > \alpha$  by construction and classical conformal arguments, the strong selection-conditional coverage guarantee follows by Lemma \ref{lemma:p-valid-strong}.
\end{proof}

\newpage 
The proofs of Proposition~\ref{prop:fcr} and Proposition~\ref{prop:lord-fcr} are adapted from Theorem 2 and Theorem 5 in \cite{weinstein2020online}, respectively, with modifications to align with the conformal framework.

\begin{proof}[Proof of Proposition~\ref{prop:fcr}] First, let $(s_0,\dots,s_{T}) \in \{0,1\}^{t-1}$ be any sequence, such that $\sum_{i} s_i > 0$. Now, assume that  $\Prob \left\{ Y_t \in \widehat{C}_t(X_t) \mid S_0 = s_0, \dots, S_{t-1} = s_{t-1}, S_t=1 \right\} \geq 1-\alpha$ for any $t \geq 0$. Then, 
\begin{align*}
&\E\left[\frac{\sum_{t=0}^T S_t \Ind{Y_t \not\in \widehat{C}_t(X_t) }}{\sum_{j=0}^T S_j} \, \Big| \, S_0 = s_0, \dots, S_T = s_T\right] \\[0.2cm]
&\quad \quad = \frac{1}{\sum_{j=0}^T s_j} \E\left[\sum_{t=0}^T S_t \Ind{Y_t \not\in \widehat{C}_t(X_t) } \, \Big| \, S_0 = s_0, \dots, S_T = s_T\right] \\[0.2cm]
&\quad \quad = \frac{1}{\sum_{j=0}^T s_j} \E\left[\sum_{t\leq T: s_t = 1}  \Ind{Y_t \not\in \widehat{C}_t(X_t) } \, \Big| \, S_0 = s_0, \dots, S_T = s_T\right] \\[0.2cm]
&\quad \quad = \frac{1}{\sum_{j=0}^T s_j} \sum_{t\leq T: s_t = 1}\Prob \left[ Y_t \not\in \widehat{C}_t(X_t)  \, \Big| \, S_0 = s_0, \dots, S_T = s_T\right] \\[0.2cm]
&\quad \quad \tagging{a}{=} \frac{1}{\sum_{j=0}^T s_j} \sum_{t\leq T: s_t = 1}\Prob \left[ Y_t \not\in \widehat{C}_t(X_t)  \, \Big| \, S_0 = s_0, \dots, S_{t-1} = s_{t-1}, S_t = 1 \right] \\[0.2cm]
&\quad \quad \tagging{b}{\leq} \frac{1}{\sum_{j=0}^T s_j} \sum_{t\leq T: s_t = 1} \alpha \\[0.2cm]
&\quad \quad = \alpha,
\end{align*}
where $(a)$ follows since given $(S_0, \dots, S_t)$ the selection decisions $(S_{t + 1}, \dots, S_T)$ are independent of $\{  Y_t \not\in \widehat{C}_t(X_t)\}$. Inequality $(b)$ follows from strong selection-conditional coverage. Now, taking the expectation over the conditional distribution of $S_1, \dots, S_T$ given $\sum_i S_i > 0$
yields 
\begin{align*}
\pFCR(T) =  \E\left[\frac{  \sum_{t=0}^T S_t \Ind{Y_t \not\in \widehat{C}_t(X_t;\alpha_t) }}{ \sum_{j=0}^T S_j} \, \Big| \, \sum_i S_i > 0 \right] \leq \alpha. 
\end{align*}
With that 
\begin{align*}
&\FCR(T) = \E\left[\frac{  \sum_{t=0}^T S_t \Ind{Y_t \not\in \widehat{C}_t(X_t;\alpha_t) }}{\sum_{j=0}^T S_j} \, \Big| \, \sum_i S_i > 0 \right] \cdot \Prob\left(\sum_i S_i > 0\right) \\[0.2cm] &\hspace{4cm} \leq \E\left[\frac{  \sum_{t=0}^T S_t \Ind{Y_t \not\in \widehat{C}_t(X_t;\alpha_t) }}{\sum_{j=0}^T S_j} \, \Big| \, \sum_i S_i > 0 \right] \\ &\hspace{4cm} \leq \alpha.
\end{align*}
Thus, $\FCR(T) \leq \alpha$. Clearly, if $(a)$ holds with equality, then $\pFCR(T) = \alpha$. Additionally, if $\Prob(\sum_i S_i > 0) = 1$, then $\FCR(T) = \alpha$. This completes the proof. 
\end{proof}

\newpage 
\begin{proof}[Proof of Proposition~\ref{prop:lord-fcr}] Let $T \geq 0$ and $\{S_j^{(t)}\}_{j \geq 0}$ be a sequence of selection decisions $\{S_j\}_{j \geq 0}$, where we set $S_t = 1$ deterministically. Now, define $\mathcal{F}^{T\backslash t} = \sigma\left(S_1,\dots,S_{t-1},S_{t+1},\dots,S_T\right)$. Then, we have 
 \begin{align*}
    \FCR(T) &= \sum_{t=0}^T\E\left[\frac{S_t \Ind{Y_t \not\in \widehat{C}_t(X_t;\alpha_t) }}{1\vee \sum_{j=0}^T S_j}\right]\\[0.2cm]
    &= \sum_{t=0}^T\E\left[ \frac{S_t \Ind{Y_t \not\in \widehat{C}_t(X_t;\alpha_t) }}{\sum_{j=0}^{T} S_j^{(t)}}\right]\nonumber\\[0.2cm]
    &\tagging{a}{\leq}\sum_{t=0}^T\E\left[\frac{\Ind{Y_t \not\in \widehat{C}_t(X_t;\alpha_t) }}{\sum_{j=0}^{T} S_j^{(t)}}\right]\\[0.2cm]
    &= \sum_{t=0}^T\E\left[\frac{1}{\sum_{j=0}^{T} S_j^{(t)}}\E\left[ \Ind{Y_t \not\in \widehat{C}_t(X_t;\alpha_t) } \given \mathcal{F}^{T\backslash t}\right] \right]\\[0.2cm]
    &= \sum_{t=0}^T\E\left[ \frac{1}{\sum_{j=0}^{T} S_j^{(t)}}\E\left[ \Ind{R_t> \widehat{Q}_{1-\alpha_t}(\{ R_i \}_{i \in \mathcal{D}_t})} \given  \mathcal{F}^{T\backslash t}\right] \right] \\[0.2cm]
    &\tagging{b}{\leq} \sum_{t=0}^T\E\left[\frac{\alpha_t}{\sum_{j=0}^{T} S_j^{(t)}}\right]\\[0.2cm]
    &\tagging{c}{\leq} \sum_{t=0}^T\E\left[ \frac{\alpha_t}{\sum_{j=0}^{T} S_j}\right] \\
    &\tagging{d}{\leq} \alpha,
\end{align*}
Here $(a)$ follows since $S_t \leq 1$, and $(b)$ since $\alpha_t$ is $\mathcal{F}_{t-1}$-measurable, we have for a fixed $\alpha_t \in (0,1)$ that $\E\left[ \Ind{R_t> \widehat{Q}_{1-\alpha_t}(\{ R_i \}_{i \in \mathcal{D}_t})} \given  \mathcal{F}^{T\backslash t}\right] = \E\left[ \Ind{R_t> \widehat{Q}_{1-\alpha_t}(\{ R_i \}_{i \in \mathcal{D}_t})} \right]$. Finally, $(c)$ is due to the monotonicity of the selection rules and the last inequality $(d)$ follows from the invariant \eqref{eq:invariant}.
\end{proof}

\newpage
\section{Other Conformal Methods}\label{app:other}

\subsection{LORD-CI}\label{app:lord}
For completeness, we provide a brief overview of the LORD-CI algorithm in this section, slightly altered from its original presentation for confidence sets to fit the conformal framework. The algorithm was initially proposed by \citet{weinstein2020online}, to which we refer for a more comprehensive treatment. Let $\alpha_t \in (0,1)$ be a $\mathcal{F}_t$-measurable coverage level, then a conformal prediction interval is constructed as
\begin{align*}
\widehat{C}_t(X_t) = \{ y \in \Y:R(X_t,y) \leq \widehat{Q}_{1-\alpha_t}(\{ R_i \}_{i \in \Jtf}) \}.
\end{align*}
The marginal level $\alpha_t$ is updated dynamically by maintaining the \emph{invariant}
\begin{align*}
    \frac{\sum_{t=0}^T \alpha_t}{1 \vee \sum_{t=0}^{T} S_t} \leq \alpha,\quad \text{for any }T\geq 0.
\end{align*}

Algorithm \ref{alg:lordci} is an explicit instantiation of LORD-CI, which maintains the above invariant. Initially, before any selection, the algorithm only has the small budget $W_0 \in (0, \alpha)$ to spend (spread out over time by $\gamma_t$). Once the first interval is reported, the algorithm gains an additional “error wealth” of size $(\alpha - W_0)$. After the first selection, any future time point $t$  gets an increment of $(\alpha - W_0)\gamma_{\tau_1}$ in its threshold. Similarly, every subsequent selection contributes an additional $\alpha$  distributed over future times. Thus, by time $t$, $\alpha_t$ is the sum of contributions from the initial budget and all past selections’ budgets allocated to time $t$. This mechanism guarantees that $\alpha_t$ is non-decreasing over time --- whenever a new selection occurs, future $\alpha$-levels increase or stay the same.  

\begin{algorithm}
\caption{LORD-CI procedure \citep[modified]{weinstein2020online}}
\label{alg:lordci}
\begin{algorithmic}[1]
\State \textbf{Input:} offline data $\Jtf$; sequence $\{Z_i\}$ observed sequentially; deterministic sequence $\{\gamma_i\}$ summing to one; constant $W_0 \in (0,\alpha)$; selection rules $\{\Sr_i\}$; $\alpha \in (0,1)$
\State \textbf{Output:} online FCR-adjusted selective prediction intervals
\State $t \gets 1$  
\For{$j=1,2,...$}
    \While{$\Sr_t(X_t)=0$}
        \State $t \gets t+1$
    \EndWhile
    \State $\tau_j \gets t$ 
    \State $\alpha_t \gets \gamma_t W_0 + (\alpha-W_0)\gamma_{t-\tau_1} + \alpha \sum_{\{k: \tau_{k}<t, \tau_{k}\neq \tau_1\}}\gamma_{t-\tau_{k}}$  \Comment{Monotone update rule}
    \State Report $\widehat{C}_t(X_t) = \{ y \in \Y:R(X_t,y) \leq \widehat{Q}_{1-\alpha_t}(\{ R_i \}_{i \in \Jtf)} \}.$
    \State $t \gets t+1$
\EndFor
\end{algorithmic}
\end{algorithm}

\subsection{Adaptive Conformal Inference}\label{app:aci}
The ACI algorithm \citep{gibbs2021adaptive} adjusts the miscoverage level based on historical under- or over-coverage feedback. Specifically, for a target miscoverage level \(\alpha\), the algorithm updates \(\alpha_t\) according to
\begin{align*}
\alpha_t = \alpha_{t-1} + \gamma\left(\alpha  - \Ind{Y_{t-1} \notin \widehat{C}_t(X_{t-1}, \alpha_{t-1})}\right),
\end{align*}
where $\gamma > 0$ is a fixed step-size parameter. Here, $\widehat{C}_t(X_{t-1}, \alpha_{t-1})$ denotes the prediction interval constructed at time $t-1$ with nominal miscoverage level $\alpha_{t-1}$. We note that, the performance of ACI heavily relies on a well-specified step-size parameter. \citet{gibbs2021adaptive} recommend that the step-size should be chosen proportionally to the underlying rate of change in the environment, which is unknown in practice.

\newpage
\section{Additional Simulations}\label{app:sim}
We provide additional simulation results to complement those presented in the main part of the paper. First, we recall the following (decision driven) selection rules:

\renewcommand{\arraystretch}{1.5}

\begin{table}[h!]
    \centering
    \resizebox{1\textwidth}{!}{
    \begin{tabular}{|c|c|c|}
        \hline
        \rowcolor[gray]{0.9} Selection rule (A) & Selection rule (B) & Selection rule (C) \\
        \hline
        \parbox{6.3cm}{
        \[
       \hspace{-0.1cm} x \mapsto  
        \begin{cases}
        \mathbbm{1}\left\{ x < \frac{1}{\tau_0} \sum_{i = 0}^{j - 1} S_i  \right\}  & \hspace{-0.2cm} \text{for } j \leq t - 1 \\[0.3cm]
        \mathbbm{1}\left\{ \sum_{i = 0}^{j-1} S_i > \tau_1 \right\} & \hspace{-0.2cm} \text{for } j = t,
        \end{cases}
        \]
        } &
        \parbox{4.3cm}{
        \[
        \hspace{-0.1cm} x \mapsto \mathbbm{1}\left\{ X_j < \tau_1 + \frac{1}{\tau_0} \sum_{i=0}^{t-1} S_i \right\}
        \]
        } &
        \parbox{6cm}{
        \[
        x \mapsto  \mathbbm{1}\left\{ x > \tau_1 - \min\left(\frac{1}{\tau_0} \sum_{i=0}^{t-1} S_i, \, 2\right)  \right\}
        \]
        } \\
        \hline
    \end{tabular}
    }
    \label{tab:selection-rules}
\end{table}

We evaluate several calibration selection strategies for online selective conformal prediction, which were discussed earlier. The aim is to generate prediction intervals for a test point based on previously selected calibration points, while ensuring valid selection-conditional coverage. We conduct Monte Carlo simulations to accumulate metrics, such as miscoverage and calibration set size. In each iteration of the simulation, a dataset of size $N = N_{\rm{off}} + N_{\rm{on}}$, where $N_{\rm{off}} = |\Jtf|$ and $N_{\rm{on}} = |\Jto|$ is generated. The generated data serve as potential $t$-calibration data.

In the simulations that follow, we generate $X \sim \rm{Unif}[0,2]$, and $Y = \mu(X) + \varepsilon$, where $\mu(X) = X \beta$. Additionally, we assume $\varepsilon \, | \, X \sim \mathcal{N}(0, X/2)$. 

\subsection{Selection-Conditional Coverage}
\begin{figure}[h!]
\centering
\includegraphics[width=1\linewidth]{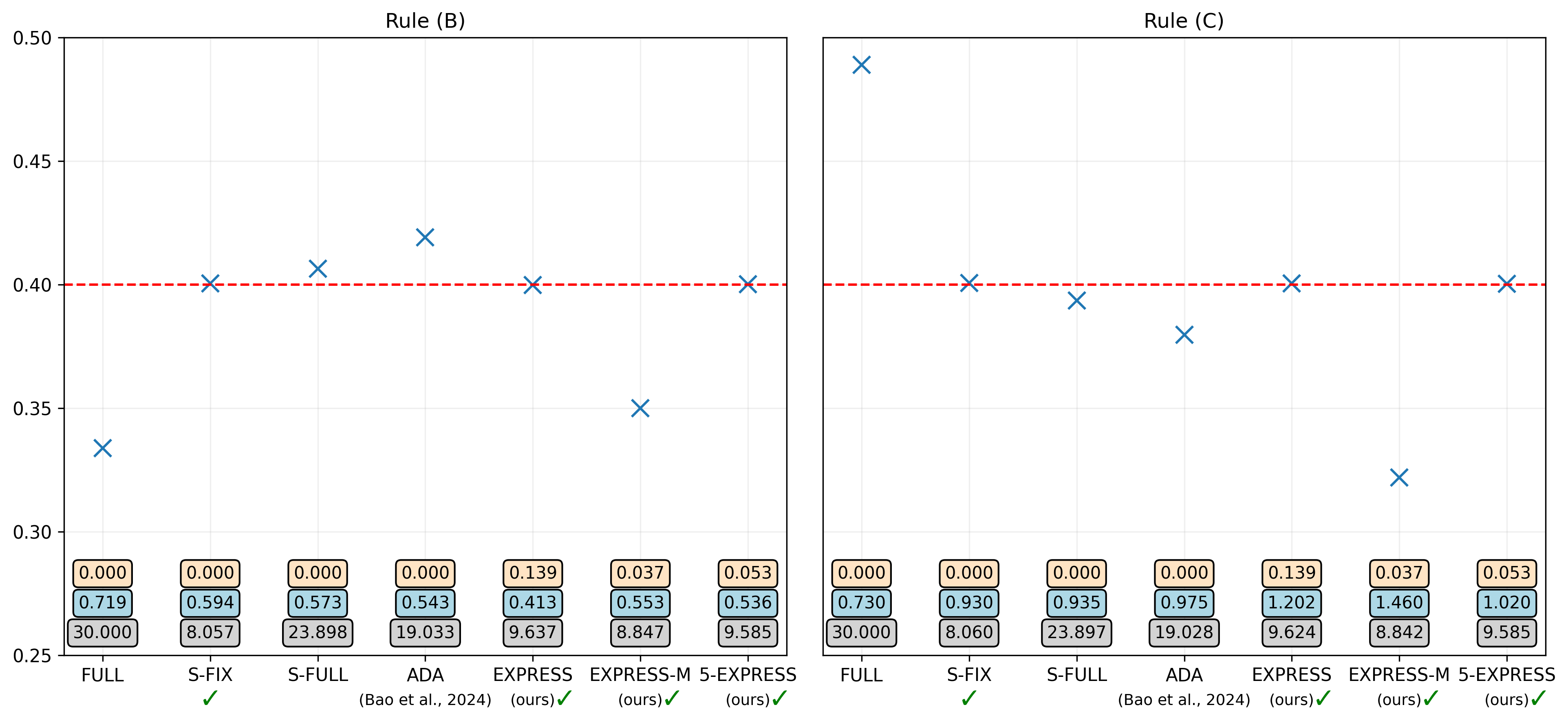}
\vspace{-0.5cm}
\caption{Miscoverage (\includegraphics[height=0.5em]{figs/blue_x_marker.png}) is shown alongside the number of calibration points (\textcolor{lightgrey}{\rule{0.55em}{0.55em}}), median interval length (\textcolor{lightblue}{\rule{0.55em}{0.55em}}) and the fraction of infinite length prediction intervals (\textcolor{bisque}{\rule{0.55em}{0.55em}}). We highlight provably correct methods (\textcolor{pythonGreen}{\ding{52}}) and the target level (\dashedred). \emph{All metrics are averaged over $N = \num{1e6}$ runs}.}
\label{fig:app-miscoverage}
\end{figure}

\begin{illustration}
We set $N_{\rm{off}} = 10$ and $N_{\rm{on}}=20$. We perform two separate simulations, one using rule (B) and the other using rule (C).  We then perform online selective conformal prediction with both existing and novel strategies. All reported metrics are averaged over $N = \num{1e6}$ runs.
\end{illustration}

\newpage 
\begin{figure}[t!]
\centering
\includegraphics[width=1\linewidth]{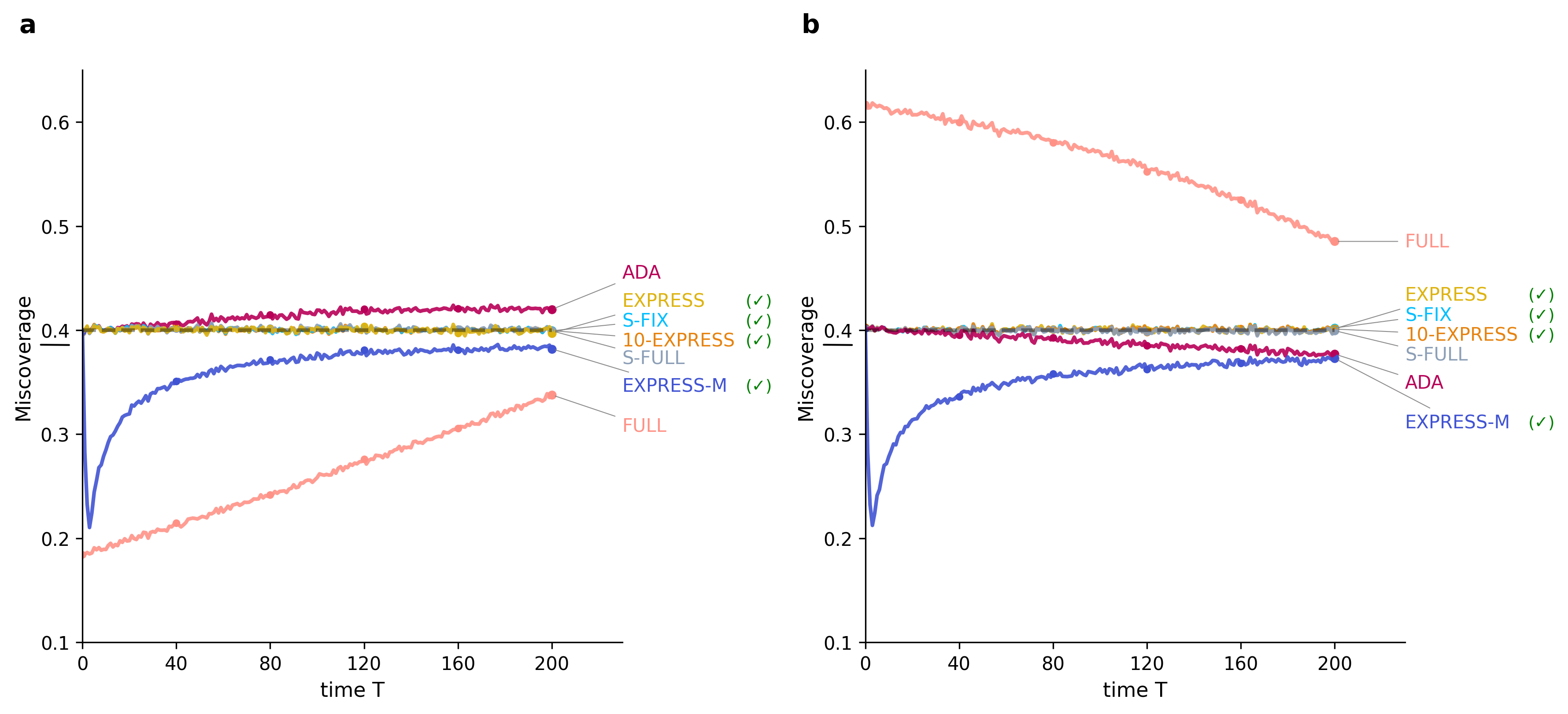}
\vspace{-0.5cm}
\caption{Miscoverage over time. The dashed black line represents the target level $\alpha = 0.4$. We highlight provably correct methods (\textcolor{pythonGreen}{\ding{52}}). \textbf{(a)} Miscoverage for selection rule (B). \textbf{(b)} Miscoverage for selection rule (C).  \emph{Averaged over  $N = \num{1e4}$ runs.}}
\label{fig:app-mis-rule-bc}
\end{figure}
\begin{illustration}
We set $N_{\rm{off}} = 50$ and $N_{\rm{on}}=200$. We run two separate simulations, one using rule (B) and the other using rule (C).  We then perform online selective conformal prediction with both existing and novel strategies. At each time $T \geq 0$ we report the miscoverage, averaged over  $N = \num{1e4}$ runs.
\end{illustration}

\newpage 
\subsection{FCR Control}
\begin{figure}[h!]
\centering
\includegraphics[width=1\linewidth]{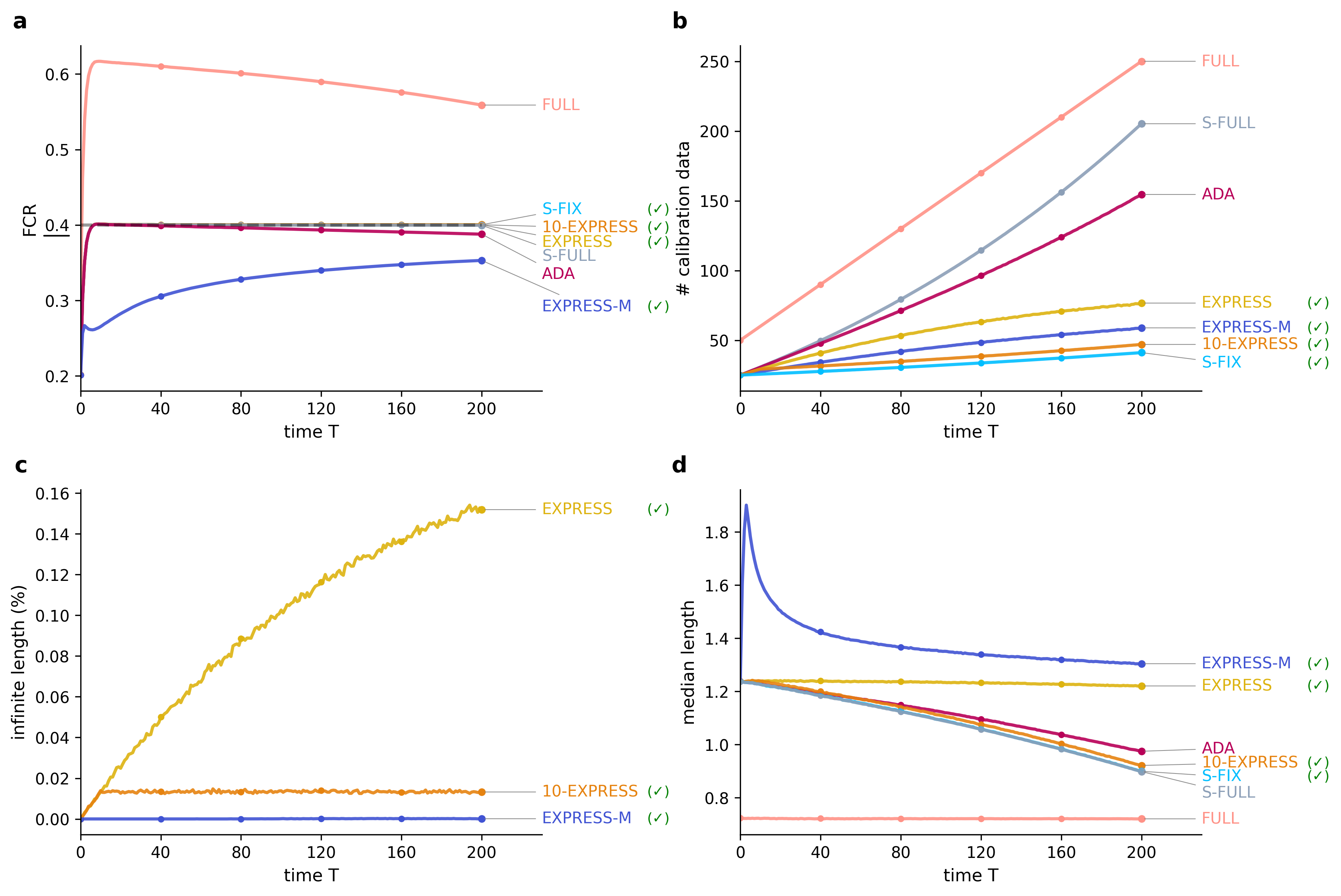}
\vspace{-0.5cm}
\caption{\textbf{(a)} FCR as a function of time $T$. The dashed black line represents the target level $\alpha = 0.4$. We highlight provably correct methods (\textcolor{pythonGreen}{\ding{52}}). \textbf{(b)} Number of calibration data points used over time. Strategies accumulating more calibration data tend to yield shorter prediction intervals. \textbf{(c)} Fraction of prediction intervals of infinite length over time. A high fraction suggests a strategy often fails to provide informative intervals. Only reported for novel strategies. \textbf{(d)} Median prediction interval length over time. Shorter intervals indicate higher informativeness. \emph{All metrics are averaged over  $N = \num{1e4}$ runs.}}
\label{fig:app-fcr-rule-c}
\end{figure}

\begin{illustration}
We set $N_{\rm{off}} = 50$ and $N_{\rm{on}}=200$. Here, we use selection rule (C). We then perform online selective conformal prediction with both existing and novel strategies.    
\end{illustration}

\newpage 
\begin{figure}[h!]
\centering
\includegraphics[width=1\linewidth]{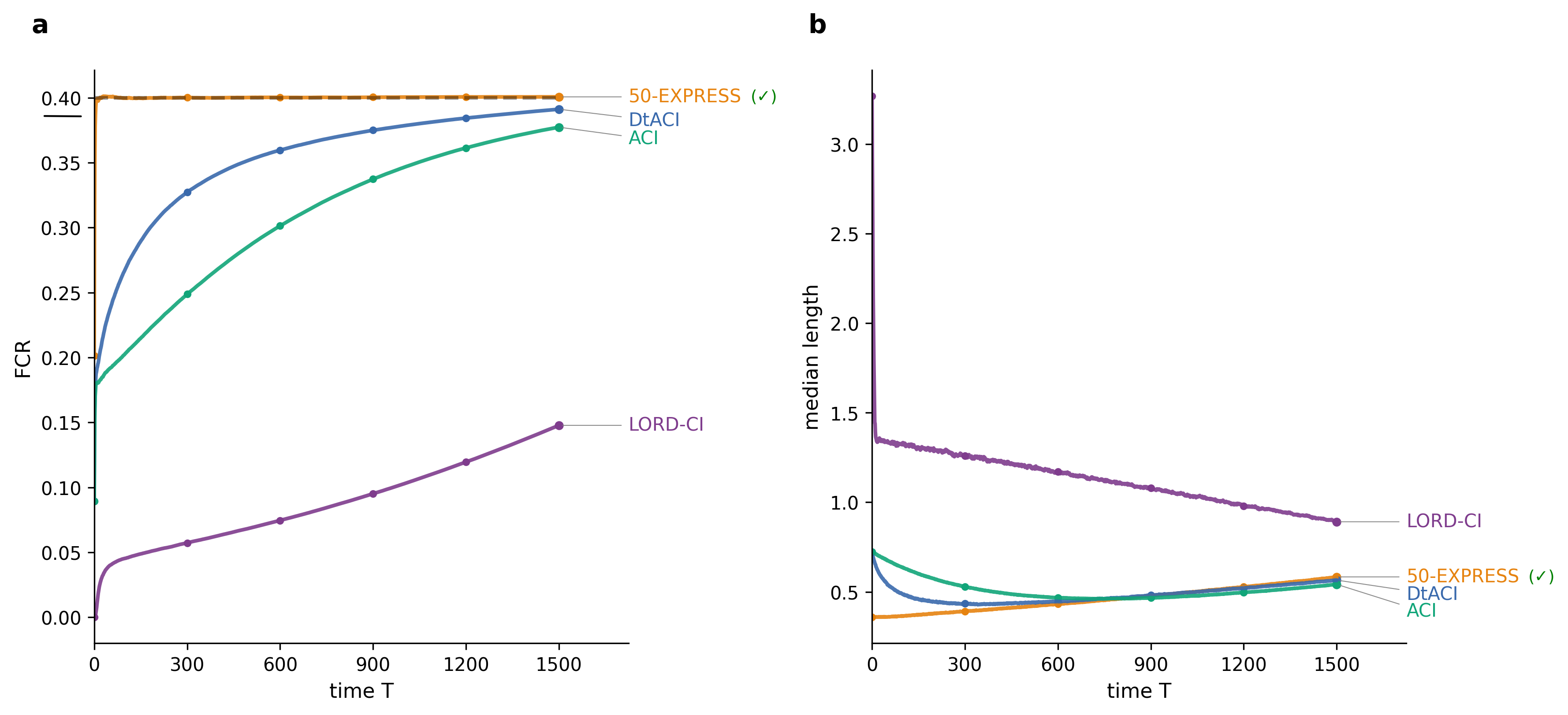}
\vspace{-0.5cm}
\caption{\textbf{(a)} FCR as a function of time $T$. The dashed black line represents the target level $\alpha = 0.4$. We highlight provably correct methods (\textcolor{pythonGreen}{\ding{52}}). \textbf{(b)}  Median prediction interval length over time. Shorter intervals indicate higher informativeness. \emph{All metrics are averaged over  $N = \num{1e4}$ runs.}}
\label{fig:app-other-rule-c}
\end{figure}

\begin{illustration}
We set $N_{\rm{off}} = 200$ and $N_{\rm{on}}=1500$. Here, we use selection rule (C). We compare online selective conformal prediction with \texttt{50-EXPRESS} strategy, ACI algorithms and LORD-CI.
\end{illustration}

\end{document}